\theoremstyle{plain}
\newtheorem{theorem}{Theorem}
\newtheorem{proposition}{Proposition}
\newtheorem{lemma}{Lemma}
\newtheorem{remark}{Remark}
\theoremstyle{definition}
\newtheorem{definition}{Definition}
\theoremstyle{remark}
\newtheorem{conjecture}{Conjecture}
\DeclareMathOperator{\tr}{tr}
\DeclareRobustCommand{\mybox}[2][gray!10]{%
	\begin{tcolorbox}[   %
		left=0pt,
		right=0pt,
		top=0pt,
		bottom=0pt,
		colback=#1,
		colframe=#1,
		width=\dimexpr\linewidth\relax, 
		enlarge left by=0mm,
		boxsep=5pt,
		arc=0pt,outer arc=0pt,
		]
		#2
	\end{tcolorbox}
}
\icmltitlerunning{On the Anisotropy of Score-Based Generative Models}
\begin{document}

\twocolumn[
  \icmltitle{On the Anisotropy of Score-Based Generative Models}




  \begin{icmlauthorlist}
    \icmlauthor{Andreas Floros}{imperial}
    \icmlauthor{Seyed-Mohsen Moosavi-Dezfooli}{apple}
    \icmlauthor{Pier Luigi Dragotti}{imperial}
  \end{icmlauthorlist}

  \icmlaffiliation{imperial}{Imperial College London}
  \icmlaffiliation{apple}{Apple}
  \icmlcorrespondingauthor{Andreas Floros}{andreas.floros18@imperial.ac.uk}

  \icmlkeywords{Machine Learning, ICML}

  \vskip 0.3in
]



\printAffiliationsAndNotice{}  

\begin{abstract}
    We investigate the role of network architecture in shaping the inductive biases of modern score-based generative models. To this end, we introduce the \textit{Score Anisotropy Directions} (SADs), architecture-dependent directions that reveal how different networks preferentially capture data structure. Our analysis suggests that SADs form adaptive bases aligned with the architecture's output geometry, providing a principled way to predict generalization ability in score models prior to training. Through both synthetic data and standard image benchmarks, we demonstrate that SADs reliably capture fine-grained model behavior and correlate with downstream performance, as measured by Wasserstein metrics. Our work offers a new lens for explaining and predicting directional biases of generative models.
\end{abstract}

\section{Introduction}

Neural networks generalize through inductive biases, i.e., biases that guide learning beyond training data \citep{wilsonbayesian, Goyal2020InductiveBF}. For discriminative tasks, they are partially characterized through the \textit{Neural Anisotropy Directions} (NADs), which reveal the architecture's directional preferences in the input space \citep{neuralanisotropydirections}. However, generative modeling lacks a cohesive theory that explains how architectural geometry interacts with data manifolds \citep{kadkhodaie2024generalization, ditinductivebiases}. In this work, we present a unified approach to explaining and interpreting inductive biases of score-based generative models by examining anisotropy in the output space, where networks exhibit preferential learning along certain directions. As motivated in the experiment of Figure \ref{fig:spheres}, we posit that generalization ability is largely characterized by the alignment of the data with the architecture's geometry at initialization, which is denoted as $\mathbf{G}_{\mathcal{F}}$ here.

Our contribution is making this notion of geometry precise and decomposing the output space in terms of anisotropy directions induced by it, i.e., the \textit{Score Anisotropy Directions}.

\mybox[gray!8]{\begin{definition}[\textbf{Score Anisotropy Directions}] \leavevmode\\ The \textit{Score Anisotropy Directions} (SADs) of an architecture are the ordered set of orthonormal vectors of the output space, $\{\boldsymbol{u}_i\}_{i=1}^{D}$, ranked in terms of the preference of
the network to generate data along those particular directions via score-based generative modeling in $\mathbb{R}^D$.
\end{definition}}

\begin{figure}[t]
    \centering
    \includegraphics[width=\linewidth]{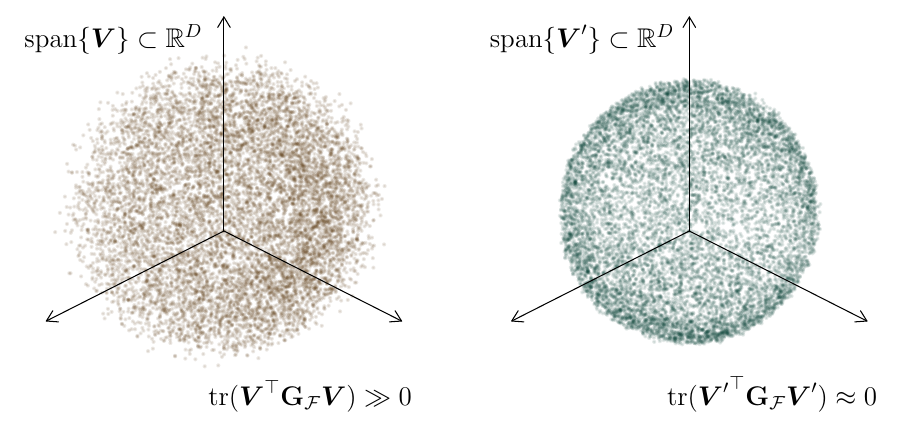}
    \caption{Sphere modeling in subspaces of $\mathbb{R}^D$ ($D=256$) via DiT \citep{dit}. The only difference is the choice of subspace: on the left, data lives in a subspace aligned with the network’s geometry, $\mathbf{G}_{\mathcal{F}}$, whereas the simulation on the right is in a non-aligned subspace. Despite identical training and sampling setups, quality differs consistently across repeated trials, suggesting that alignment with architectural geometry controls generalization.}
   \label{fig:spheres}
\end{figure}

\section{Background}

We first give an overview of score-based generative models and provide necessary context on established methodology that examines inductive biases of deep neural networks.

\subsection{Score-Based Generative Models}

At the heart of score-based generative modeling is the \textit{score function}, i.e., the gradient of the log-density of a data distribution $p$. With an arbitrary prior, $\boldsymbol{x}_0\sim\pi$, and an estimate of the score, one samples from $p$ via Langevin dynamics:
\begin{equation}
    \label{eq:langevin}\boldsymbol{x}_{k}=\boldsymbol{x}_{k-1}+\frac{\xi}{2}\overbrace{\nabla_{\boldsymbol{x}}\log p(\boldsymbol{x}_{k-1})}^{\text{``score"}} + \sqrt{\xi}\boldsymbol{z}_{k},
\end{equation}
where $\boldsymbol{z}_k\sim\mathcal{N}(\boldsymbol{0},\boldsymbol{I})$ is standard Gaussian and $\xi>0$.

If $\xi\to0$ and $K\to\infty$, under certain technical conditions, the iterates, $\boldsymbol{x}_K$, in Equation \ref{eq:langevin} converge to a sample from $p$ \citep{ICML2011Welling_398}. However, a practical limitation of the above setup is that estimated scores are inaccurate in low-density regions (e.g., in early iterations where learning is intractable). Moreover, score functions may be undefined in the case of data residing on low-dimensional manifolds. That is, the overall approach breaks down under the commonly adopted manifold hypothesis \citep{estimatinggradients}.

The research community has therefore largely moved on to Denoising Score Matching (DSM) \citep{vincentdsm} and annealed Langevin dynamics, i.e., diffusion models, which are also our main focus.\footnote{Without loss of generality, we will standardize notation to the variance exploding formulation of \citet{estimatinggradients}.} More specifically, consider noise scales, $\sigma\in[\sigma_{\text{min}},\sigma_{\text{max}}]$, and associated probability densities, $q_{\sigma}=p*\mathcal{N}(\boldsymbol{0},\sigma^2\boldsymbol{I})$, where $*$ represents convolution. Here,  $\sigma_{\text{min}}$ is small enough such that $q_{\sigma_{\text{min}}}\approx p$ and $\sigma_{\text{max}}$ is large enough so we can write $q_{\sigma_{\text{max}}}\approx\mathcal{N}(\boldsymbol{0},\sigma^2_{\text{max}}\boldsymbol{I})$. With estimates of scores of the perturbed distributions, $\nabla_{\boldsymbol{x}}\log q_{\sigma}$, one samples from $p$ by decaying $\sigma$ from $\sigma_{\text{max}}$ to $\sigma_{\text{min}}$ \citep{ddpm, song2021scorebased}. This way, accurate modeling along sampling trajectories is tractable and the noise ensures support over the entirety of the ambient space, overcoming the limitations of naive Langevin dynamics. In particular, the scores are equivalent to minimum mean squared error Gaussian denoisers \citep{tweedie}. That is, for neural networks, $\mathcal{F}_{\boldsymbol{\theta}}:\mathbb{R}^{D}\times\mathbb{R}\to\mathbb{R}^{D}$, parameterized by $\boldsymbol{\theta}$, one can approximate the scores via the following DSM optimization:
\begin{equation}
    \min_{\boldsymbol{\theta}}\mathbb{E}_{\boldsymbol{x}\sim p,\boldsymbol{\epsilon}\sim\mathcal{N}(\boldsymbol{0},\boldsymbol{I}),\sigma}[\hat{\mathcal{J}}_{\text{DSM}}(\boldsymbol{x}, \boldsymbol{\epsilon}, \sigma; \mathcal{F}_{\boldsymbol{\theta}})],
\end{equation}
with $\hat{\mathcal{J}}_{\text{DSM}}(\boldsymbol{x}, \boldsymbol{\epsilon}, \sigma; \mathcal{F}_{\boldsymbol{\theta}}):=\left\|\mathcal{F}_{\boldsymbol{\theta}}(\boldsymbol{x}+\sigma\boldsymbol{\epsilon},\sigma)+\frac{\boldsymbol{\epsilon}}{\sigma}\right\|^2_2$.

\subsection{Inductive Biases in Deep Learning}

There is a vast literature on understanding the inductive biases of deep neural networks \citep{wilsonbayesian, Goyal2020InductiveBF}. Of particular interest is the work of \citet{neuralanisotropydirections}, who identify directional biases in classifiers, i.e., the \textit{Neural Anisotropy Directions} (NADs). More recently, \citet{movahedi2025geometric} extended the NAD framework by formalizing input-space architectural geometry and exploring its evolution during training. Specifically, they propose the \textit{Geometric Invariance Hypothesis}, which, roughly, posits that NADs persist through training.

To our knowledge, contrary to the discriminative case, there is no unified theory on preferred modeling directions in the generative setting. \citet{kadkhodaie2024generalization} have argued that convolutional diffusion models are biased towards \textit{Geometry-Adaptive Harmonic Bases} (GAHBs). However, they acknowledge that a mathematically precise definition of such bases remains an open question.

More fundamentally, follow-up work by \citet{ditinductivebiases} suggests that the GAHB framework may not extend to transformer-based diffusion models. In particular, they instead resort to a transformer-specific analysis by inspecting and manipulating attention maps.

In contrast to the above-mentioned works on the inductive biases of diffusion models, our goal is to present a more unified treatment of directional biases that is agnostic to the neural network's architecture. Our key insight is recognizing that the NAD framework can be adapted and extended to score-based modeling by assuming an underlying data log-density that assigns high probability to ``on-manifold" data and low probability otherwise. With this approach, our analysis amounts to understanding anisotropy directions of such implicitly induced discriminative models.

\section{Directional Biases of Diffusion Models}
\label{sec:directionalinductivebiasfordiffusionmodels}

Central to our exploration of directional inductive biases in diffusion models is the following research question:
\begin{center}
\textit{Among equidimensional manifolds, which are preferred by diffusion modeling and how are preferences quantified?}
\end{center}
Toward answering the above, we consider data manifolds aligned with a particular direction, $\boldsymbol{v}\in\mathbb{S}^{D-1}$, and investigate generalization ability as a function of the direction. Concretely, we will first study anisotropic Gaussian distributions of the form $\mathcal{N}(\boldsymbol{0},\propto\boldsymbol{v}\boldsymbol{v}^\top)$ with the objective of finding a suitable basis for $\mathbb{R}^D$ from which we can draw $\boldsymbol{v}$ that reveal directional preferences. For a given basis, we independently train diffusion models on datasets formed by each normalized element under identical settings. Performance is quantified via the (Max-)Sliced Wasserstein $p$-distance, $\text{(M)SW}_p$, between the distribution obtained by sampling from the trained model, $\mu$, and the ground truth, corresponding to an anisotropic Gaussian $\nu$. Notably, these are valid statistical distances and slice metrics are easily estimated via order statistics and Monte Carlo methods. Taking $p=2$, the proposed metrics are defined as follows: 
\begin{equation}
\begin{split}&\text{SW}^2_2(\mu,\nu)=\mathbb{E}_{\boldsymbol{\theta}\sim\mathcal{U}(\mathbb{S}^{D-1})}\text{W}^2_2(\boldsymbol{\theta}^\top_{\#}\mu,\boldsymbol{\theta}^\top_{\#}\nu), \\& \text{MSW}^2_2(\mu,\nu)=\sup_{\boldsymbol{\theta}\in\mathbb{S}^{D-1}}\text{W}^2_2(\boldsymbol{\theta}^\top_{\#}\mu, \boldsymbol{\theta}^\top_{\#}\nu), \\& \text{W}^2_2(\boldsymbol{\theta}^\top_{\#}\mu, \boldsymbol{\theta}^\top_{\#}\nu)=\int_{0}^{1}|F^{-1}_{\boldsymbol{\theta}^\top_{\#}\mu}(q)-F^{-1}_{\boldsymbol{\theta}^\top_{\#}\nu}(q)|^2\text{d}q,\end{split}
\end{equation}
where $\mathcal{U}$ denotes a uniform distribution, $F^{-1}_{(\cdot)}$ are quantile functions and $(\cdot)_\#$ represents the push-forward. Further details regarding our setup are included in Appendix \ref{appendix:setup}.

\begin{figure*}[t!]
    \centering

    {{\includegraphics[width=\textwidth]{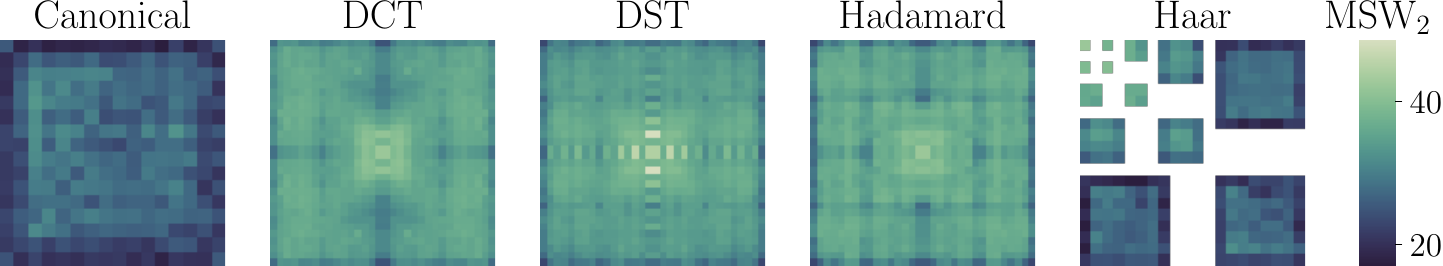}}}
    
    \caption{$\text{MSW}_2$ distance (computed over 10k test samples and 16384 projections) of iDDPM U-Net \citep{improveddiffusion} architecture. Each pixel corresponds to a rank-one dataset of 16$\times$16 images (with 10k training samples) aligned with a basis element of the canonical basis, DCT, DST, (ordered) Hadamard transform or Haar wavelet transform. That is, for a given location (canonical) or frequency / sequency (DCT, DST, Hadamard) or scale, channel and location (Haar), we visualize performance on the corresponding dataset. For ease of visualization, in the case of DCT, DST and Hadamard, we center the lowest frequency dataset and extend the images to the left and top regions while respecting the symmetries of the transforms. See Appendix \ref{appendix:setup} for further implementation details.}
   \label{fig:signal-processing-bases}
\end{figure*}

In Figure \ref{fig:signal-processing-bases} we report our findings (averages over five runs) with the above-described approach over five common bases typically studied in signal processing literature.

We note two seemingly disconnected phenomena. Contrary to the conventional wisdom that neural networks better adapt at lower frequencies \citep{spectralbiasnn}, the widely-adopted iDDPM U-Net architecture \citep{improveddiffusion} struggles with low-frequency data. This is evidenced by the center points of the DCT, DST, Hadamard images and the top-left corner of the Haar image. Also, comparing results of the canonical and Haar bases with the frequency / sequency transforms, we see that vectors localized in space are better modeled, especially around the image borders.

Although the experiments of Figure \ref{fig:signal-processing-bases} are insightful, it is unclear whether standard bases can reliably describe the biases of diffusion models given that they are completely decoupled from the underlying architecture. We argue that, in general, we cannot expect to uncover the intricacies of directional biases in this manner. Moreover, each considered basis amounts to training hundreds of diffusion models, which becomes impractical even for relatively low-dimensional data. We are therefore motivated to investigate a more principled and unified framework for directional biases.

\subsection{Why Would Certain Directions Be Preferred?}
\label{subsec:why}

While the learning process has a number of hyperparameters that potentially induce asymmetry, we focus on analyzing the role of the architecture. Prior work on biases of discriminative models argues preferences may emerge due to anisotropic loss of information or, in general, conditioning of the optimization landscape \citep{neuralanisotropydirections}.

Indeed, such conditioning naturally manifests in the iDDPM U-Net architecture, e.g., via asymmetric resampling layers. We verify this in Figure \ref{fig:iddpm-resampling-effect}, where we see that the default \texttt{nearest} interpolation leads to a clear directional bias. Similarly, one hypothesizes that border effects observed in canonical and Haar experiments of Figure \ref{fig:signal-processing-bases} are attributed to the padding strategy employed in convolutional layers. Intuitively, one also expects that data living on the borders of images is poorly approximated by the network.

To more rigorously understand the effect of anisotropic conditioning, we now revisit the problem of learning rank-one distributions in a tractable setting, amenable to analysis. Specifically, for our Gaussian data, we note that linear DSM is sufficiently expressive, as demonstrated in Lemma \ref{lemma:linearoptimalscore}. In this setup, we choose to model anisotropy explicitly, via a fixed transformation at the output with decaying eigenvalues. Under a (Stochastic) Gradient Descent, (S)GD, procedure, learning dynamics are then characterized by the following.

\begin{theorem}[Linear DSM dynamics, proof in Appendix \ref{appendix:linearscoreproof}]
    \label{theorem:linearscore}
    Consider DSM with data drawn from $\mathcal{N}(\boldsymbol{0},\boldsymbol{v}\boldsymbol{v}^\top)$ for noise level $\sigma>0$ and $\boldsymbol{v}\in\mathbb{S}^{D-1}$. Let $\mathcal{F}:\mathbb{R}^{D}\to\mathbb{R}^D$ be linear networks expressed as $\boldsymbol{\Omega}(\cdot)$, where $\boldsymbol{\Omega}=\boldsymbol{\Phi}\boldsymbol{\Theta}$ with $\boldsymbol{\Phi}$ fixed. Denote the sorted eigenvalues of $\boldsymbol{\Phi}\boldsymbol{\Phi}^\top$ as $\{\lambda_i\}^{D}_{i=1}$ with $\lambda_{D-1}>\lambda_{D}>0$ and corresponding normalized eigenvectors as $\{\boldsymbol{u}_{i}\}^{D}_{i=1}$. Assume a (S)GD procedure on initially zero-mean $\boldsymbol{\Theta}$, where the score is approximated by $\mathcal{F}$.
    
    Choosing $\boldsymbol{v}=\boldsymbol{u}_{i}$, after $t$ steps, and with a sufficiently small learning rate $\eta>0$, the mean error, $\mathbb{E}[\boldsymbol{\Omega}_t-\boldsymbol{\Omega}_*]$, to the optimal solution, $\boldsymbol{\Omega}_*$, converges as $\mathcal{O}[(1-2\eta\rho_i)^t]$ with $\rho_1=\rho_i<\rho_D$ $\forall i<D$. Moreover, at optimality, the SGD steps with respect to $\boldsymbol{\Theta}$ for $\boldsymbol{v}=\boldsymbol{u}_i$ have covariance $\mathrm{Cov}[\nabla_{\boldsymbol{\Theta}}\hat{\mathcal{J}}_{\mathrm{DSM}}]\propto\lambda_i$. That is, small eigenvalue alignment yields more stable minima compared to large eigenvalues.
\end{theorem}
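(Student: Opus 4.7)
The plan is to diagonalize in the eigenbasis of $\boldsymbol{A}:=\boldsymbol{\Phi}\boldsymbol{\Phi}^\top$ and exploit that taking $\boldsymbol{v}=\boldsymbol{u}_i$ makes the noisy-data covariance $\boldsymbol{\Sigma}_y:=\boldsymbol{v}\boldsymbol{v}^\top+\sigma^2\boldsymbol{I}$ co-diagonalizable with $\boldsymbol{A}$. First I evaluate the DSM loss in closed form. Writing $\boldsymbol{y}=z\boldsymbol{v}+\sigma\boldsymbol{\epsilon}$ with $z\sim\mathcal{N}(0,1)$ and using $\mathbb{E}[\boldsymbol{\epsilon}\boldsymbol{y}^\top]=\sigma\boldsymbol{I}$ yields, up to constants, $L(\boldsymbol{\Theta})=\tr(\boldsymbol{\Phi}\boldsymbol{\Theta}\boldsymbol{\Sigma}_y\boldsymbol{\Theta}^\top\boldsymbol{\Phi}^\top)+2\tr(\boldsymbol{\Phi}\boldsymbol{\Theta})$. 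Stationarity combined with Sherman--Morrison gives the optimum $\boldsymbol{\Omega}^*:=\boldsymbol{\Phi}\boldsymbol{\Theta}^*=-\boldsymbol{\Sigma}_y^{-1}=-\sigma^{-2}(\boldsymbol{I}-\boldsymbol{v}\boldsymbol{v}^\top/(1+\sigma^2))$, reproducing Lemma~\ref{lemma:linearoptimalscore}.

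Next I propagate GD through $\boldsymbol{\Omega}_t=\boldsymbol{\Phi}\boldsymbol{\Theta}_t$ and track the error $\boldsymbol{E}_t:=\boldsymbol{\Omega}_t-\boldsymbol{\Omega}^*$. From $\nabla_{\boldsymbol{\Theta}} L=2(\boldsymbol{\Phi}^\top\boldsymbol{\Phi}\boldsymbol{\Theta}\boldsymbol{\Sigma}_y+\boldsymbol{\Phi}^\top)$, left-multiplying by $\boldsymbol{\Phi}$ and invoking $\boldsymbol{\Omega}^*\boldsymbol{\Sigma}_y+\boldsymbol{I}=\boldsymbol{0}$ collapses the recursion to $\boldsymbol{E}_{t+1}=\boldsymbol{E}_t-2\eta\,\boldsymbol{A}\boldsymbol{E}_t\boldsymbol{\Sigma}_y$. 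For $\boldsymbol{v}=\boldsymbol{u}_i$, passing to the common eigenbasis $\boldsymbol{U}$ decouples this into scalar updates $(\tilde{\boldsymbol{E}}_{t+1})_{jk}=(1-2\eta\lambda_j M_{kk})(\tilde{\boldsymbol{E}}_t)_{jk}$ with $M_{kk}=1+\sigma^2$ for $k=i$ and $M_{kk}=\sigma^2$ otherwise. Because $\mathbb{E}[\boldsymbol{\Theta}_0]=\boldsymbol{0}$ and $\boldsymbol{U}^\top\boldsymbol{\Omega}^*\boldsymbol{U}$ is diagonal, only the $j=k$ modes drive $\mathbb{E}[\tilde{\boldsymbol{E}}_t]$, and the mean error is dominated by the smallest such $\lambda_j M_{jj}$. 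A case split finishes the rate identification: for $i<D$ the minimum is $\rho_i=\lambda_D\sigma^2$, since $\lambda_i(1+\sigma^2)\ge\lambda_{D-1}(1+\sigma^2)>\lambda_D\sigma^2$ and $\lambda_j\sigma^2\ge\lambda_D\sigma^2$ for $j\ne i$; for $i=D$ the minimum runs over $j<D$, giving $\rho_D=\min(\lambda_{D-1}\sigma^2,\lambda_D(1+\sigma^2))>\lambda_D\sigma^2$. Hence $\rho_1=\rho_i<\rho_D$ for all $i<D$.

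For the SGD covariance claim I evaluate the one-sample gradient $g=2\boldsymbol{\Phi}^\top\boldsymbol{r}\boldsymbol{y}^\top$ at $\boldsymbol{\Theta}^*$, where the residual $\boldsymbol{r}:=\boldsymbol{\Omega}^*\boldsymbol{y}+\boldsymbol{\epsilon}/\sigma$ simplifies via Sherman--Morrison to $\boldsymbol{r}=\tfrac{\boldsymbol{v}}{1+\sigma^2}(\boldsymbol{v}^\top\boldsymbol{\epsilon}/\sigma-z)$. The scalar $w:=\boldsymbol{v}^\top\boldsymbol{\epsilon}/\sigma-z$ is jointly Gaussian with $\boldsymbol{y}$ but satisfies $\operatorname{Cov}(w,\boldsymbol{y})=\boldsymbol{0}$, so it is independent of $\boldsymbol{y}$. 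Vectorizing via $\operatorname{vec}(\boldsymbol{\Phi}^\top\boldsymbol{r}\boldsymbol{y}^\top)=\boldsymbol{y}\otimes\boldsymbol{\Phi}^\top\boldsymbol{r}$ then factorizes the gradient covariance as $\operatorname{Cov}(g)=\boldsymbol{\Sigma}_y\otimes\boldsymbol{\Phi}^\top\mathbb{E}[\boldsymbol{r}\boldsymbol{r}^\top]\boldsymbol{\Phi}$. Since $\mathbb{E}[\boldsymbol{r}\boldsymbol{r}^\top]\propto\boldsymbol{v}\boldsymbol{v}^\top=\boldsymbol{u}_i\boldsymbol{u}_i^\top$, the right factor equals $\lambda_i\boldsymbol{w}_i\boldsymbol{w}_i^\top$ for the unit vector $\boldsymbol{w}_i:=\boldsymbol{\Phi}^\top\boldsymbol{u}_i/\sqrt{\lambda_i}$, delivering the claimed proportionality to $\lambda_i$. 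Near optimality the same expression controls the step covariance up to vanishing corrections in $\boldsymbol{E}_t$.

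The main bookkeeping obstacle I expect is reducing the mean recursion to a single slowest scalar mode: this requires combining the zero-mean initialization with the explicit diagonality of $\boldsymbol{U}^\top\boldsymbol{\Omega}^*\boldsymbol{U}$, and then performing the case split over $i$ to compare the two candidate rates $\lambda_i(1+\sigma^2)$ and $\lambda_D\sigma^2$. Everything else is linear algebra that becomes transparent once the simultaneous diagonalization of $\boldsymbol{A}$ and $\boldsymbol{\Sigma}_y$ is in place.
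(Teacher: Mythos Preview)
Your proposal is correct and follows essentially the same route as the paper: derive the closed-form DSM loss and its gradient, pass to the error recursion $\boldsymbol{E}_{t+1}=\boldsymbol{E}_t-2\eta\,\boldsymbol{\Phi}\boldsymbol{\Phi}^\top\boldsymbol{E}_t(\boldsymbol{v}\boldsymbol{v}^\top+\sigma^2\boldsymbol{I})$, exploit the simultaneous diagonalizability when $\boldsymbol{v}=\boldsymbol{u}_i$ to reduce to scalar modes, and then perform the same case split on $i<D$ versus $i=D$ to obtain $\rho_i=\sigma^2\lambda_D$ and $\rho_D=\min((1+\sigma^2)\lambda_D,\sigma^2\lambda_{D-1})$. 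Your SGD covariance computation is likewise the paper's argument with the residual $\boldsymbol{r}$ written out explicitly (your $w$--$\boldsymbol{y}$ independence is exactly their $\boldsymbol{p}$--$\boldsymbol{q}$ independence), yielding the same Kronecker factorization and the $\lambda_i$ proportionality via $\|\boldsymbol{\Phi}^\top\boldsymbol{u}_i\|^2=\lambda_i$.
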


\begin{remark}
\label{remark:gdiso}
If in the setup of Theorem \ref{theorem:linearscore} we let $\lambda_{D-1}=\lambda_{D}$, GD dynamics become isotropic. That is, the convergence rate of the mean is $\mathcal{O}[(1-2\eta\sigma^2\lambda_{D})^t]$, independent of $i$.\end{remark}

Curiously, the effect of anisotropic conditioning is limited in deterministic GD. Instead, our derivations suggest that stochasticity is the key ingredient that enables it. To confirm this, we design a small-scale experiment in $\mathbb{R}^5$, with the results shown in Figure \ref{fig:linear-dsm-proposition}. Specifically, for small $t$, the optimization dynamics are well-predicted by the deterministic GD analysis of Theorem \ref{theorem:linearscore} since, intuitively, the iterates are far from the optimum and the deterministic drift, i.e., the gradient, dominates the stochastic fluctuation. Consequently, all vectors except $\boldsymbol{u}_5$ yield similar results, as expected.

For large $t$, however, where the gradient norm is sufficiently small, noise dominates and it appears we have convergence to a stationary distribution. In such stochastic regimes, the error scales with the magnitude of the stochastic gradient covariance \citep{mandt2017stochastic}, which we show to be $\propto\lambda_i$.

The takeaway from this discussion is that the best performance is achieved when the data is \textit{not} aligned with the ``geometry" that is induced by the score network, but instead lives in the subspace defined by its \textit{smallest} eigenvalues.

\begin{figure}[t]
    \centering

\includegraphics[width=\linewidth]{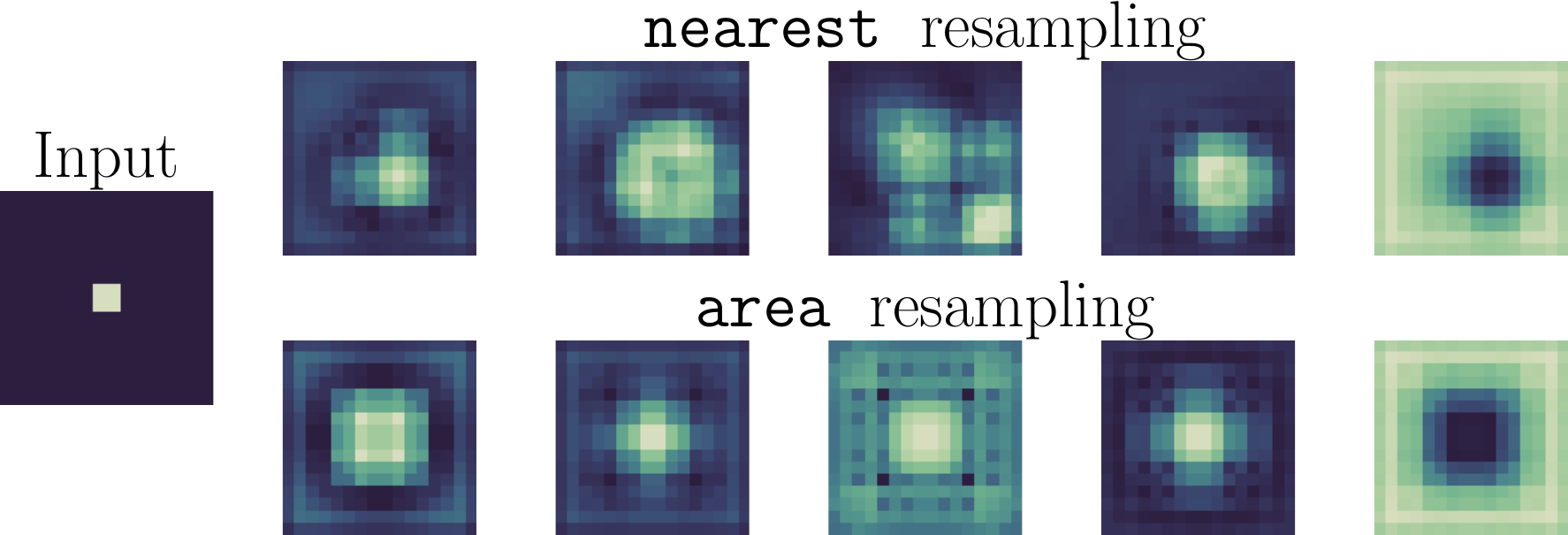}
\caption{Responses of the iDDPM architecture \citep{improveddiffusion} with a symmetric initialization scheme. We show the default implementation, which uses \texttt{nearest} resampling layers, and a modified architecture that uses \texttt{area} resampling. We probe the models with a centered impulse input, shown on the left. Observe that the default resampling layers introduce asymmetry.}
\label{fig:iddpm-resampling-effect}
\end{figure}

\subsection{Identifying the Score Anisotropy Directions}
\label{subsec:generalizationrankone}

Having developed some intuition in tractable settings, we now extend and formalize these ideas more broadly, for potentially nonlinear architectures and arbitrary distributions.

Concretely, for each noise level, $\sigma$, and assuming a (conservative and normalizable) parameterization via a neural network family, $\mathcal{F}$, we intuitively treat a realization, $\mathcal{F}_{\boldsymbol{\theta}}:\mathbb{R}^D\times\mathbb{R}\to\mathbb{R}^D$, as implicitly defining a log-density function, $\boldsymbol{x}\mapsto \log q_{\boldsymbol{\theta}, \sigma}(\boldsymbol{x})$, that assigns high probability to $\boldsymbol{x}$ on the noise-perturbed data manifold and low probability to off-manifold data, i.e., we can write $\mathcal{F}_{\boldsymbol{\theta}}(\boldsymbol{x},\sigma)\approx\nabla_{\boldsymbol{x}}\log q_{\boldsymbol{\theta},\sigma}(\boldsymbol{x})$. Here, we use the term ``manifold" loosely, meaning regions of $\mathbb{R}^D$ that are statistically likely to be sampled by score-based modeling via $\mathcal{F}_{\boldsymbol{\theta}}$, i.e., the spanning set of the top SADs that we aim to uncover.

Intuitively, if one were to fix $\boldsymbol{x}$ and consider a small perturbation along some direction, $\boldsymbol{v}$, an abrupt change in the log-density indicates falling off or entering the manifold, that is, $\boldsymbol{v}=\boldsymbol{v}_{\perp}$ is perpendicular to the preferred modeling directions. Similarly, if $\boldsymbol{v}=\boldsymbol{v}_{\parallel}$ is parallel to the manifold, then we expect minimal changes in the log-density, i.e., we are traveling along a contour line. We refer the reader to Figure \ref{fig:argument-illustration} for an illustration of our argument. Now, by taking the limit as the perturbation magnitude, $\tau$, tends to zero, we summarize these dynamics via directional derivatives, i.e., we can approximate changes in log-density via $|\log q_{\boldsymbol{\theta},\sigma}(\boldsymbol{x}+\tau\boldsymbol{v})-\log q_{\boldsymbol{\theta},\sigma}(\boldsymbol{x})|\propto|\boldsymbol{v}^\top\nabla_{\boldsymbol{x}}\log q_{\boldsymbol{\theta},\sigma}(\boldsymbol{x})|$.

With this simplification, a straightforward application of Markov's inequality bounds the \textit{a priori} probability of crossing the manifold by moving along $\boldsymbol{v}$, suggesting that directions attaining the minimum upper bound are inherently easier to model. That is, such directions are aligned with the prior densities induced by $\mathcal{F}_{\boldsymbol{\theta}}$. Specifically, for the family of networks, $\mathcal{F}$, parameterized by $\boldsymbol{\theta}\sim\Theta$ over noise levels, $\sigma$, and probing with $(\boldsymbol{x},\sigma)\sim\mathcal{P}$, we write the bound:
\begin{equation}
\begin{split}
\mathbb{P}\bigl(\bigl|&\boldsymbol{v}^\top\nabla_{\boldsymbol{x}}\log q_{\boldsymbol{\theta},\sigma}(\boldsymbol{x})\bigr|\geq \gamma\bigr)
\;\le \\&
{
  \boldsymbol{v}^\top
    \underbrace{\Bigl[\mathbb{E}_{(\boldsymbol{x}, \sigma)\sim\mathcal{P},\,\boldsymbol{\theta}\sim\Theta}
      \mathcal{F}_{\boldsymbol{\theta}}(\boldsymbol{x},\sigma)\,\mathcal{F}_{\boldsymbol{\theta}}(\boldsymbol{x},\sigma)^\top
    \Bigr]}_{\text{``geometry"}}
  \,\boldsymbol{v}
}/{
  \gamma^2
}.
\end{split}
\end{equation}
We view the above heuristic as a generalization of the conclusions of Section \ref{subsec:why}, recovering our previous findings as a special case. In particular, applying this bound on the setting of Theorem \ref{theorem:linearscore} with $\boldsymbol{\theta}$ iid and for nondegenerate $\mathcal{P}$, the quantity over the underbrace, namely the \textit{geometry}, recovers $\boldsymbol{\Phi}\boldsymbol{\Phi}^\top$, whose eigendecomposition defined the SADs in the case of linear networks. Moreover, the Markov bound correctly predicts small-eigenvalue vectors are easier to model compared to larger-eigenvalue eigenvectors. We therefore hope that this quantity also captures directional biases of more general architectures. We formalize this notion below.
\mybox[gray!8]{\begin{definition}[\textbf{Average Geometry}]
    \label{def:avggeom}\leavevmode\\
    Let $\mathcal{F}_{\boldsymbol{\theta}}: \mathbb{R}^{D}\times\mathbb{R}\to\mathbb{R}^{D}$ be a family of networks parameterized by $\boldsymbol{\theta}$. In the context of diffusion modeling, we define the average geometry of $\mathcal{F}$, induced by a probing distribution, $\mathcal{P}$, and a parameter distribution $\Theta$, as:
    \begin{equation}
        \mathbf{G}_{\mathcal{F}}(\mathcal{P},\Theta)=\mathbb{E}_{\mathcal{P}, \Theta}\left[\mathcal{F}_{\boldsymbol{\theta}}(\boldsymbol{x},\sigma)\mathcal{F}_{\boldsymbol{\theta}}(\boldsymbol{x},\sigma)^\top\right],
    \end{equation}
    where $(\boldsymbol{x},\sigma)\sim\mathcal{P}$. We assume $\mathcal{F}_{\boldsymbol{\theta}}$ is, roughly, aligned with some density, $q_{\boldsymbol{\theta}, \sigma}$, via $\mathcal{F}_{\boldsymbol{\theta}}(\boldsymbol{x},\sigma)\approx\nabla_{\boldsymbol{x}}\log q_{\boldsymbol{\theta},\sigma}(\boldsymbol{x})$.
\end{definition}}
\begin{figure}[t]
    \centering

\includegraphics[width=0.69\linewidth]{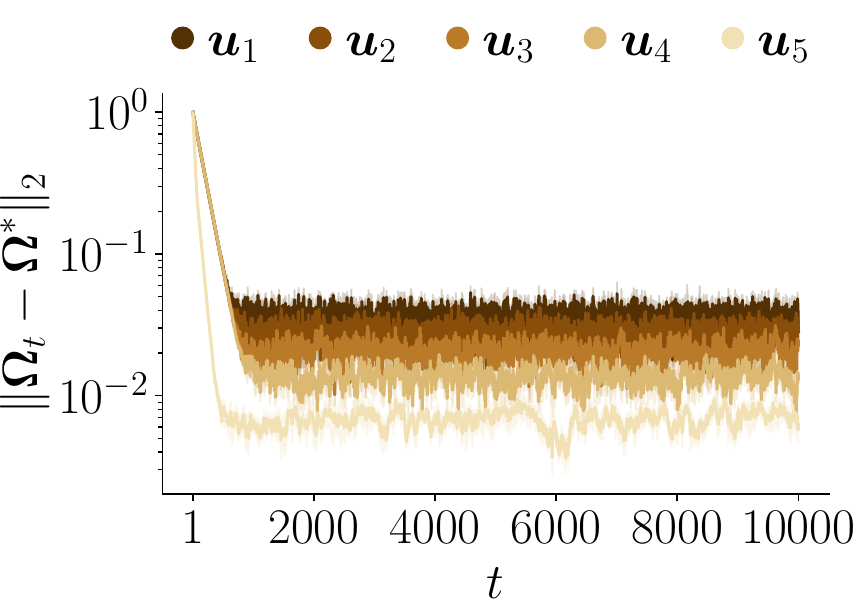}
\caption{Setting of Theorem \ref{theorem:linearscore} ($\sigma=1$) in $\mathbb{R}^5$ with SGD. For the eigenvectors, $\{\boldsymbol{u}_i\}^{5}_{i=1}$, of $\boldsymbol{\Phi}\boldsymbol{\Phi}^\top$, we show errors between $\boldsymbol{\Omega}_t=\boldsymbol{\Phi}\boldsymbol{\Theta}_t$ and the optimal operator, $\boldsymbol{\Omega}^{*}$, defined in Lemma \ref{lemma:linearoptimalscore}.}
\label{fig:linear-dsm-proposition}
\end{figure}
\begin{figure*}[t!]
    \centering

    \subfloat{{{\includegraphics[width=0.231\textwidth]{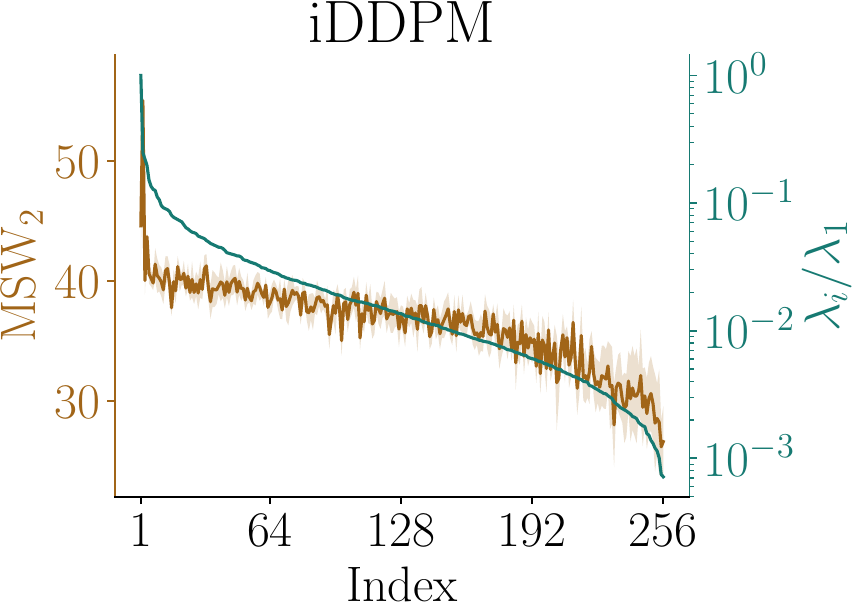}}}} \quad
    \subfloat{{{\includegraphics[width=0.231\textwidth]{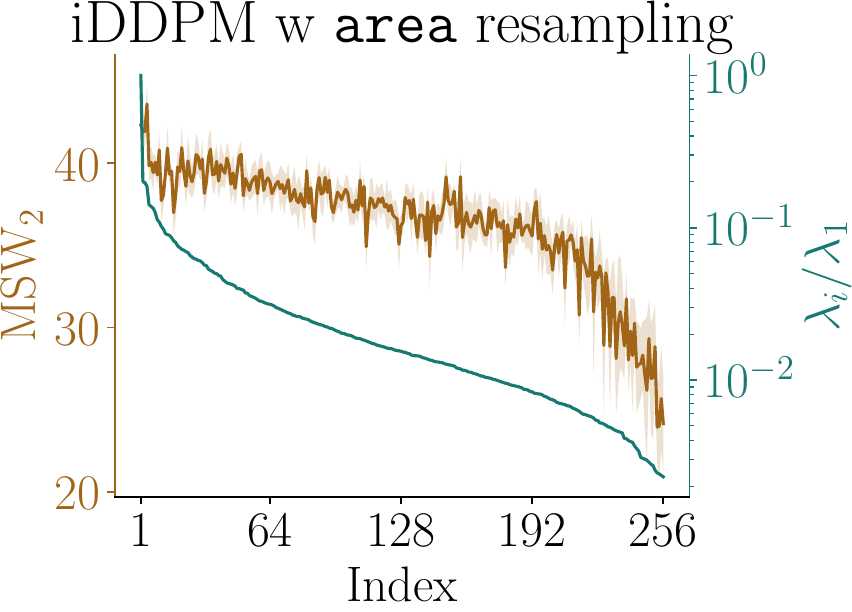}}}} \quad
    \subfloat{{{\includegraphics[width=0.231\textwidth]{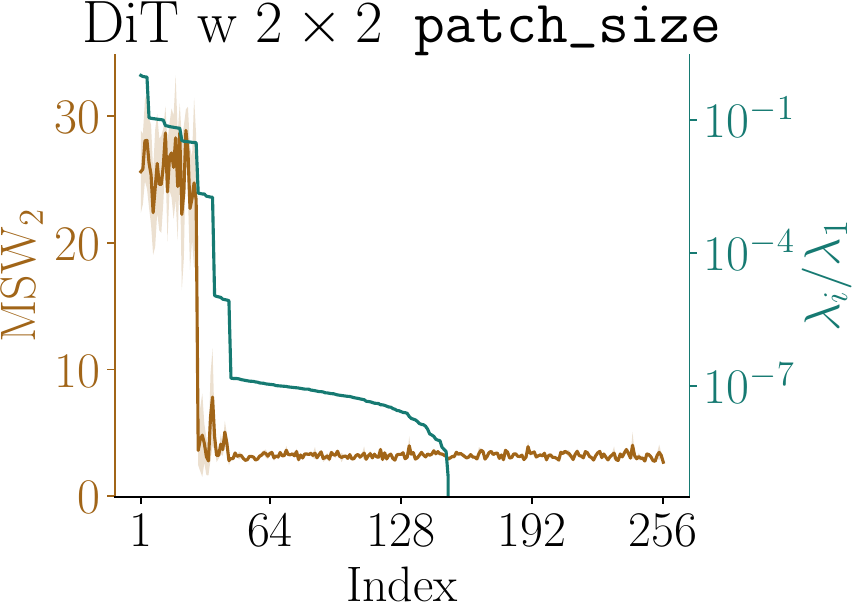}}}} \quad
    \subfloat{{{\includegraphics[width=0.231\textwidth]{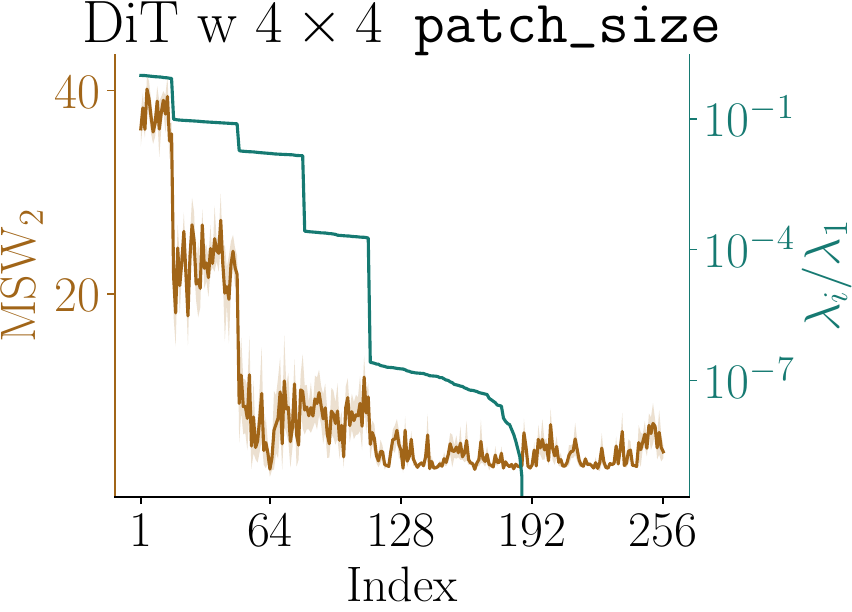}}}} \\
    \subfloat{{{\includegraphics[width=0.231\textwidth]{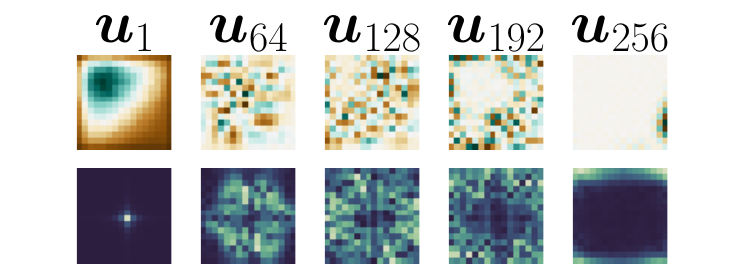}}}} \quad
    \subfloat{{{\includegraphics[width=0.231\textwidth]{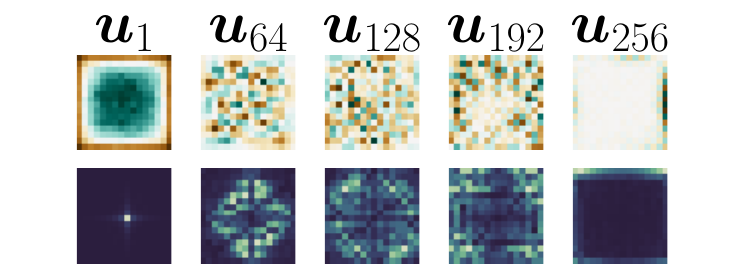}}}} \quad
    \subfloat{{{\includegraphics[width=0.231\textwidth]{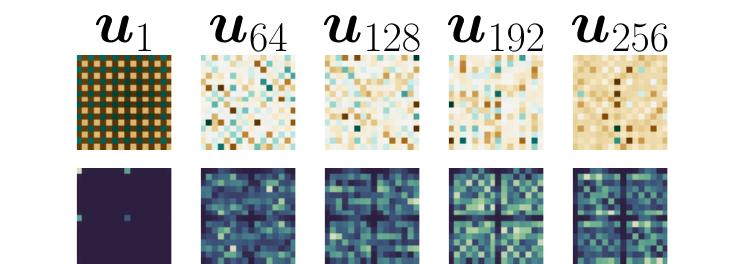}}}} \quad
    \subfloat{{{\includegraphics[width=0.231\textwidth]{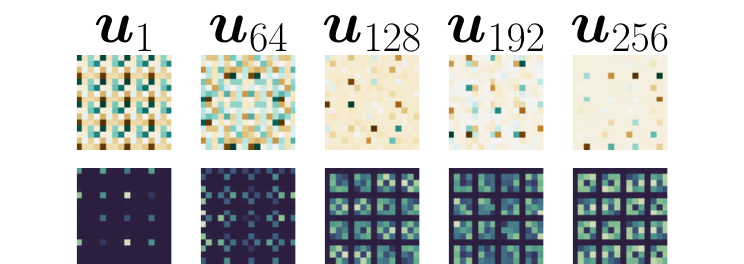}}}}
    \caption{Test $\text{MSW}_2$ distance for different architectures on datasets aligned with the eigenvectors of their geometry at initialization, probing with $\mathcal{P}=\delta_{\boldsymbol{0}}\times\mathcal{U}(\{\sigma_{\text{min}},\dots,\sigma_{\text{max}}\})$, where $\delta$ denotes the Dirac delta. We report the mean $\pm$ the standard error over five independent runs. Corresponding normalized eigenvalues are on the right axes. The eigenvectors, with their energy in the DFT (zero frequency is centered), are shown below the plots (first, last row respectively). The experimental setup is identical to the one described in the beginning of Section \ref{sec:directionalinductivebiasfordiffusionmodels} and Figure \ref{fig:signal-processing-bases}. We refer the reader to Appendix \ref{appendix:setup} for further implementation details.}

   \label{fig:dads}
\end{figure*}
\begin{figure}[t]
    \centering
    \includegraphics[width=0.4\textwidth, height=0.266666666666666\textwidth, keepaspectratio, clip]{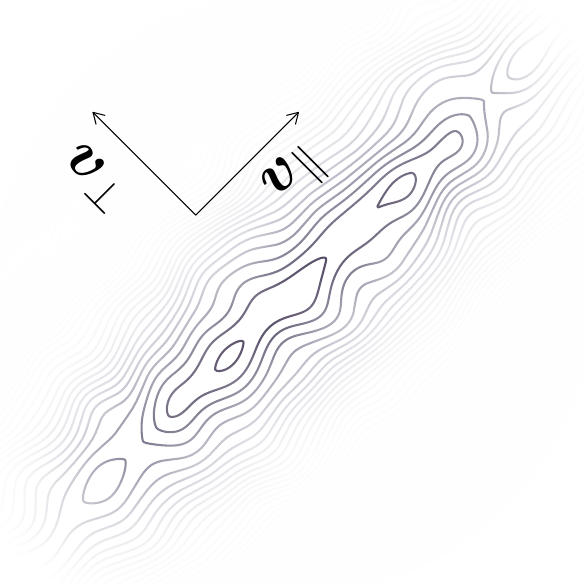}
    \caption{Visualization of our argument for uncovering the SADs. We show contours of a hypothetical landscape, $\log q_{\boldsymbol{\theta},\sigma}(\boldsymbol{x})$, with $\boldsymbol{v}_{\parallel}$ parallel to an induced manifold and $\boldsymbol{v}_{\perp}$ orthogonal.}
   \label{fig:argument-illustration}
\end{figure}
Note, unlike the linear case investigated in Theorem \ref{theorem:linearscore}, the notion of average geometry is, in general, local and adapts to the probe $\mathcal{P}$. However, in practice, we find that this is a non-critical hyperparameter and not central to our analysis. 

To decouple the geometry from the data, one may consider an isotropic probe, e.g., $\mathcal{N}(\boldsymbol{0},\sigma^2_{\mathcal{P}}\boldsymbol{I})$. For simplicity, we default to $\mathcal{P}=\delta_{\boldsymbol{0}}\times\mathcal{U}(\{\sigma_{\text{min}},\dots,\sigma_{\text{max}}\})$, i.e., we fix $\boldsymbol{x}=\boldsymbol{0}$ and draw $\sigma$ uniformly from the noise levels of interest. We shall write the geometry under this default probe as $\mathbf{G}_{\mathcal{F}}$, where the parameters, $\boldsymbol{\theta}$, of $\mathcal{F}$ are assumed to be drawn from the default initialization scheme for the architecture.

Our expectation is that $\mathbf{G}_{\mathcal{F}}$ captures meaningful invariants tied only to the architecture, such as the quirks of iDDPM U-Nets discussed in Section \ref{subsec:why}, that therefore survive and persist in the optimization dynamics of DSM. In particular, having formalized the concept of output-space geometry, we are now ready to state our main conjecture regarding the SADs, linking them to the average geometry at initialization.

\mybox[gray!8]{\begin{conjecture}
\label{conj:sads}
Let $\mathcal{F}$ be a family of networks with geometry $\mathbf{G}_{\mathcal{F}}$. We hypothesize that the eigenvectors of $\mathbf{G}_{\mathcal{F}}$, in \textit{ascending} eigenvalue order, are the SADs. That is, we expect that data aligned with eigenvectors corresponding to small eigenvalues is better modeled compared to data that is aligned with large-eigenvalue eigenvectors.
\end{conjecture}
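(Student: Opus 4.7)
The overall plan is to reduce Conjecture~\ref{conj:sads} to Theorem~\ref{theorem:linearscore} by a linearization argument at initialization, combined with a score-to-sample stability argument. For internal consistency, I would first check the linear setting of Theorem~\ref{theorem:linearscore}: with a symmetric probe such as $\mathcal{P}=\mathcal{N}(\boldsymbol{0},\sigma_{\mathcal{P}}^2\boldsymbol{I})\times\delta_{\sigma_0}$ and a zero-mean iid initialization of $\boldsymbol{\Theta}$, the expectation over $\boldsymbol{\Theta}$ factorizes and $\mathbf{G}_{\mathcal{F}}\propto\boldsymbol{\Phi}\boldsymbol{\Phi}^\top$, so its ascending-eigenvalue basis coincides with that of $\boldsymbol{\Phi}\boldsymbol{\Phi}^\top$, and Theorem~\ref{theorem:linearscore} already yields the SAD ordering claimed by the conjecture. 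Hence the conjecture holds in the linear regime, and the remaining task is to export this behaviour to nonlinear families $\mathcal{F}$.

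For general architectures, I would pass to a lazy/NTK-style regime and expand $\mathcal{F}_{\boldsymbol{\theta}_0+\boldsymbol{\delta}}(\boldsymbol{x}_\sigma,\sigma)\approx\mathcal{F}_{\boldsymbol{\theta}_0}(\boldsymbol{x}_\sigma,\sigma)+\boldsymbol{J}_{\boldsymbol{\theta}_0}(\boldsymbol{x}_\sigma,\sigma)\boldsymbol{\delta}$, so that the DSM objective becomes linear in $\boldsymbol{\delta}$ with a conditioning matrix of exactly the form analyzed in Theorem~\ref{theorem:linearscore}, but with $\boldsymbol{J}_{\boldsymbol{\theta}_0}$ in the place of $\boldsymbol{\Phi}$. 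The key auxiliary claim I would target is that, under a symmetric initialization, the parameter-Jacobian Gram matrix $\mathbb{E}_{\boldsymbol{\theta}_0}[\boldsymbol{J}_{\boldsymbol{\theta}_0}\boldsymbol{J}_{\boldsymbol{\theta}_0}^\top]$ and the output geometry $\mathbf{G}_{\mathcal{F}}=\mathbb{E}_{\boldsymbol{\theta}_0}[\mathcal{F}_{\boldsymbol{\theta}_0}\mathcal{F}_{\boldsymbol{\theta}_0}^\top]$ share an eigenbasis with proportional eigenvalues, because both are obtained by pushing the same output-side linear maps through the random-parameter layers. Granting this, applying Theorem~\ref{theorem:linearscore} in the eigenbasis of $\mathbf{G}_{\mathcal{F}}$ yields the ordering of training-time convergence rates prescribed by the conjecture.

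To translate this into the generation-time preference captured by the $\text{MSW}_2$ curves of Figure~\ref{fig:dads}, I would invoke a score-to-sample stability result for the annealed Langevin sampler, e.g., a Girsanov-type bound showing that the one-dimensional Wasserstein error along a fixed direction $\boldsymbol{u}_i$ is controlled by the DSM residual projected onto $\boldsymbol{u}_i$. Combined with the Markov bound preceding Definition~\ref{def:avggeom}, which already indicates that the same eigenvectors minimize the a priori probability of leaving the induced ``manifold'', this would close the loop between small eigenvalues of $\mathbf{G}_{\mathcal{F}}$ and low Wasserstein error along the corresponding directions.

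The main obstacle will be justifying the lazy/NTK reduction for the architectures that actually matter here: the U-Net and DiT experiments are well outside any rigorous kernel regime, and Theorem~\ref{theorem:linearscore} itself already shows that the anisotropy effect is sharply amplified by the SGD covariance, so a deterministic linearization would likely underestimate the phenomenon. A secondary but stubborn difficulty is the Jacobian/output eigenbasis-matching lemma: for convolutions with padding, resampling, and attention I would expect it to require a layer-by-layer commutation-with-symmetries analysis rather than a generic perturbation bound. For these reasons I do not expect a clean unconditional proof and would ultimately treat the statement as a conjecture supported by Figure~\ref{fig:dads}, with the linearized reduction playing the role of a rigorous sufficient condition that at least isolates the minimal structural assumptions under which the conjecture must hold.
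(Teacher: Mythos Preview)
The statement is labeled a \emph{conjecture} and the paper does not prove it. Immediately after stating it, the authors write ``In the remainder of the paper, we provide empirical evidence to justify Conjecture~\ref{conj:sads},'' and what follows is purely experimental (Figures~\ref{fig:dads}, \ref{fig:mnist}, \ref{fig:sw2-comparison}) together with two pieces of supporting analysis already present before the conjecture: the linear case of Theorem~\ref{theorem:linearscore} and the heuristic Markov-bound argument leading to Definition~\ref{def:avggeom}. There is no attempt at a proof for nonlinear architectures.

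Your proposal therefore overshoots the paper. Your opening consistency check (that $\mathbf{G}_{\mathcal{F}}\propto\boldsymbol{\Phi}\boldsymbol{\Phi}^\top$ in the linear setting with iid $\boldsymbol{\Theta}$, so Theorem~\ref{theorem:linearscore} already yields the SAD ordering) is exactly the observation the paper makes in the paragraph following the Markov bound, and your closing sentence---treat the statement as a conjecture supported by Figure~\ref{fig:dads}---is precisely the paper's position. Everything in between, namely the NTK/lazy linearization, the eigenbasis-matching lemma between $\mathbb{E}[\boldsymbol{J}_{\boldsymbol{\theta}_0}\boldsymbol{J}_{\boldsymbol{\theta}_0}^\top]$ and $\mathbf{G}_{\mathcal{F}}$, and the Girsanov-type score-to-sample stability, is your own programme and has no counterpart in the paper. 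It is a reasonable sketch of what a rigorous sufficient condition might look like, and you are candid about the two main gaps (the kernel regime not covering the architectures of interest, and the Jacobian/output eigenbasis matching requiring a delicate layer-wise argument). But be aware that the paper neither claims nor needs any of this: for the authors, the linear theorem plus the Markov heuristic motivate the conjecture, and the figures carry the evidential weight.
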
}

In the remainder of the paper, we provide empirical evidence to justify Conjecture \ref{conj:sads}. First, we verify our claims on the rank-one datasets introduced in the beginning of Section \ref{sec:directionalinductivebiasfordiffusionmodels} in Figure \ref{fig:dads}, where we draw directions, $\boldsymbol{v}\in\mathbb{S}^{D-1}$, from the eigenvectors of $\mathbf{G}_{\mathcal{F}}$ and benchmark via Wasserstein metrics.

\begin{figure*}[t]
    \centering
    \includegraphics[width=\textwidth]{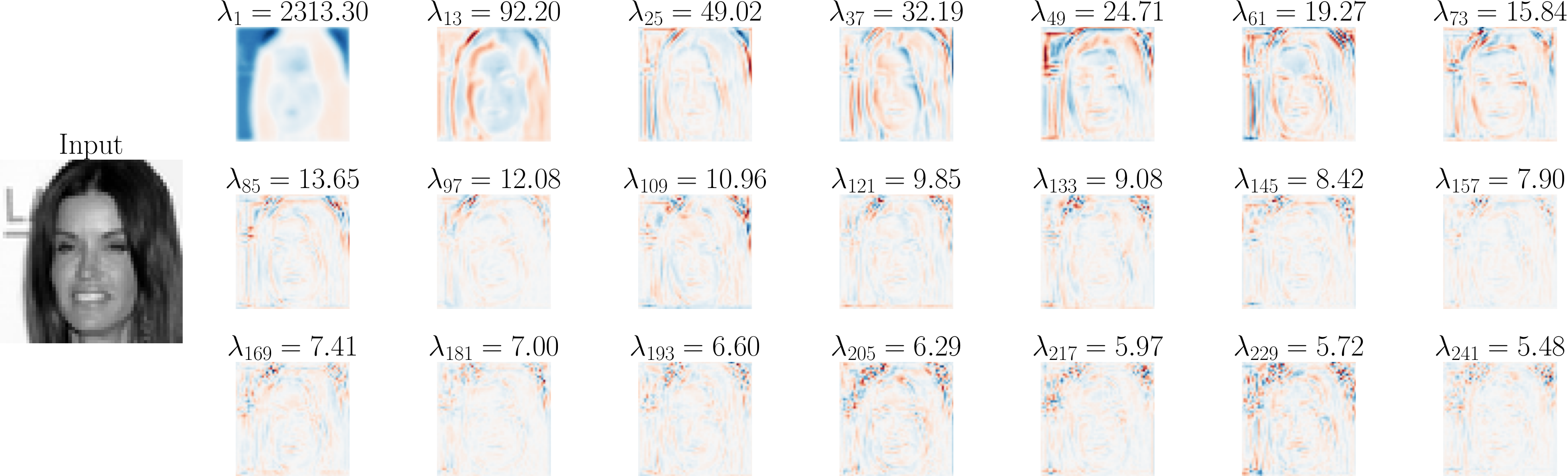}
    \caption{Eigendecomposition of iDDPM average geometry, $\mathbf{G}_{\mathcal{F}}(\mathcal{P},\Theta)$, at initialization with probing distribution $\mathcal{P}=\mathcal{N}(\boldsymbol{x},\sigma^2\boldsymbol{I})\times\mathcal{U}(\{\sigma_{\text{min}},\dots,\sigma_{\text{max}}\})$. That is, we probe along the standard forward diffusion process with the data sample, $\boldsymbol{x}$, shown on the left. The first few eigenvectors of the geometry, together with the corresponding unnormalized eigenvalues, are shown on the right. We note that the results resemble the GAHBs of \citet{kadkhodaie2024generalization}. Specifically, our eigenvectors are also adaptive to the geometry of the input image. Moreover, they have harmonic structure as we observe oscillating patterns whose frequency increases as the eigenvalue decreases.}
    \label{fig:ourgahbs}
\end{figure*}

On the left in Figure \ref{fig:dads}, we focus on iDDPM U-Nets \citep{improveddiffusion}, which are representative of convolutional diffusion models. The experiments show a clear trend in support of the conjecture, where eigenvectors corresponding to small eigenvalues achieve the best performance and large-eigenvalue vectors have the worst performance. Interestingly, with reference to the visualizations below the plots, we also observe harmonic patterns in the eigenvectors, where large eigenvalues correspond to low frequencies and small eigenvalues to high frequencies. In this sense, our findings provide further evidence in support of the theory of GAHBs \citep{kadkhodaie2024generalization}, which posits that harmonic representations are fundamental in convolutional models. In particular, our experiment in Figure \ref{fig:ourgahbs} largely reproduces previously observed patterns. Crucially, beyond the harmonic structure that only loosely characterizes the inductive biases, our notion of geometry is flexible as it is adaptive to architectural details. This is evident in the first eigenvectors, $\boldsymbol{u}_1$, when we ablate the resampling method, also investigated previously in the experiment of Figure \ref{fig:iddpm-resampling-effect}.

As further evidence of the flexibility of our framework, we now focus our analysis on the DiT architecture \citep{dit}, which is representative of transformer-based diffusion models. The results, shown on the right in Figure \ref{fig:dads}, are also in support of Conjecture \ref{conj:sads}. However, as also noted in \citet{ditinductivebiases}, the transformer architecture does not appear to induce harmonic representations. It is also interesting to observe that DiT geometry exhibits considerable eigen-multiplicity, with larger patch sizes amplifying this. We interpret this as evidence of looser structure and weaker inductive biases compared to convolutional networks.

In fact, Proposition \ref{prop:transformergeometry} shows that the number of transformer tokens, $T$, is an upper bound on the number of distinct eigenvalues of $\mathbf{G}_{\mathcal{F}}$. In general, we refer the reader to Appendix \ref{appendix:commongeometries} for an analytical treatment of geometries of common architectures and we note that a surprising amount of structure can be inferred simply by inspecting the output layers.

\subsection{Score Anisotropy Directions in the Wild}

Having identified the preferred modeling directions in our experiments on rank-one datasets, we now turn to more realistic data distributions that are encountered in practice. Based on our analysis in Section \ref{subsec:generalizationrankone}, we hypothesize that generalization is largely determined by the (mis)alignment of the data with the average geometry at initialization. For an arbitrary data distribution, $p$, we can extend our setup by defining the alignment with the network, $\alpha$, as follows:
\begin{equation}
\label{eq:alignment}
    \alpha:=\mathbb{E}_{\boldsymbol{x}\sim p}[\boldsymbol{z}^\top\mathbf{G}_{\mathcal{F}}\boldsymbol{z}], \quad \boldsymbol{z}=\boldsymbol{W}\boldsymbol{x}, \quad \boldsymbol{W}^\top\boldsymbol{W}=\boldsymbol{I}.
\end{equation}
Note, in order to vary $\alpha$ in a way that preserves underlying structure, we introduce the orthogonal matrix $\boldsymbol{W}\in\mathbb{R}^{D\times D}$, which models simple and lossless data transformations, effectively defining a linear autoencoder and a kind of ``latent" diffusion model that operates on the transformed data. With this setup, our Conjecture \ref{conj:sads} amounts to the claim that the best performance is observed when $\alpha$ is minimized and the worst when $\alpha$ is maximized. The corresponding orthogonal matrices, $\boldsymbol{W}_{\min}$ and $\boldsymbol{W}_{\max}$, that achieve such extreme (mis)alignment are given by Theorem \ref{theorem:alignment}.

\begin{theorem}[Extreme alignment, proof in Appendix \ref{appendix:alignment}]
    \label{theorem:alignment}
Let $\boldsymbol{U}$, $\boldsymbol{V}$ be matrices whose columns are eigenvectors of $\mathbf{G}_{\mathcal{F}}$, $\mathbb{E}_{\boldsymbol{x}\sim p}[\boldsymbol{x}\boldsymbol{x}^\top]$ respectively, ordered according to their associated eigenvalues from largest to smallest. Let $\boldsymbol{J}$ denote the row-reversed identity matrix. Then, $\alpha$ is minimized for $\boldsymbol{W}_{\min}=\boldsymbol{U}\boldsymbol{J}\boldsymbol{V}^\top$ and maximized for $\boldsymbol{W}_{\max}=\boldsymbol{U}\boldsymbol{V}^\top$.
\end{theorem}

\begin{figure*}[t!]
    \centering

    \subfloat[$\alpha\approx53$, $\text{MSW}_2\approx0.61$]{{{\includegraphics[width=0.31\textwidth]{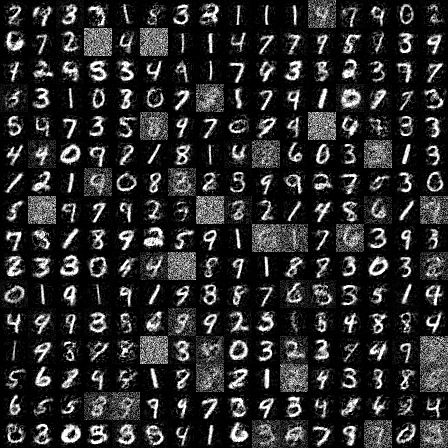}}}} \quad
    \subfloat[$\alpha\approx168$k, $\text{MSW}_2\approx8.78$]{{{\includegraphics[width=0.31\textwidth]{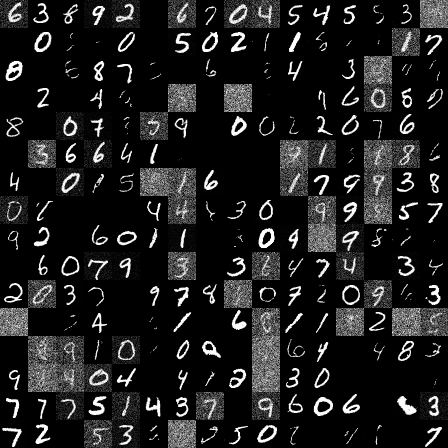}}}} \quad
    \subfloat[$\alpha\approx211$k, $\text{MSW}_2\approx8.52$]{{{\includegraphics[width=0.31\textwidth]{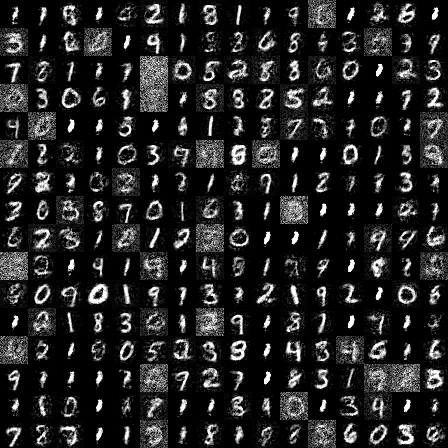}}}}
    \caption{Uncurated samples of MNIST-trained iDDPMs. We vary the alignment of the data with the geometry, $\alpha:=\mathbb{E}_{\boldsymbol{x}\sim p}[\boldsymbol{z}^\top\mathbf{G}_{\mathcal{F}}\boldsymbol{z}]$, by transforming $p$ via appropriate orthogonal matrices $\boldsymbol{W}$. On the left, we show the effect of minimizing $\alpha$, where the model gives a reasonable approximation of the ground truth distribution. The middle shows the default alignment, i.e., we do not apply any transformation. As also quantified via the Wasserstein distances, it is evident that the data distribution is not well-modeled in this case as a considerable fraction of samples do not contain a digit. The right shows samples from the model corresponding to maximizing $\alpha$, where we see similar $\text{MSW}_2$ to the middle. Interestingly, this last experiment suggests mode collapse, where the model is more likely to generate the digit ``1".}

   \label{fig:mnist}
\end{figure*}

\begin{figure}
    \centering
    \includegraphics[width=0.4\textwidth, height=0.266666666666666\textwidth, keepaspectratio, clip]{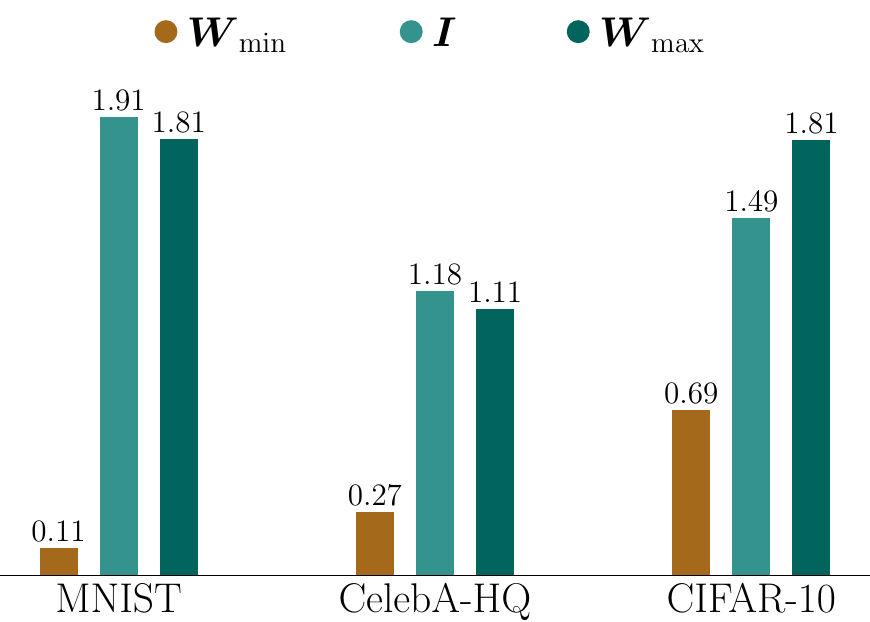}
    \caption{iDDPM network performance ($\text{SW}_2$) on standard image datasets as the alignment with the geometry, $\alpha$, varies. We show the effect of minimizing, maximizing $\alpha$ as well as the default setting. These correspond to matrices $\boldsymbol{W}_{\min}$, $\boldsymbol{W}_{\max}$ and $\boldsymbol{I}$ respectively.}
   \label{fig:sw2-comparison}
\end{figure}

To test our hypothesis, we train such latent diffusion models under identical settings for each $\boldsymbol{W}$. We also consider a baseline corresponding to $\boldsymbol{W}=\boldsymbol{I}$, i.e., the natural alignment of the data with the average geometry. In particular, we conduct experiments on the MNIST \citep{mnist} ($28\times28$), CelebA-HQ \citep{karras2018progressive} ($56\times56$) and CIFAR-10 \citep{Krizhevsky2009LearningML} ($32\times32$) datasets, with implementation details included in Appendix \ref{appendix:setup}.

Our findings, shown in Figures \ref{fig:mnist} and \ref{fig:sw2-comparison}, broadly agree with our central conjecture. For example, focusing on the MNIST samples obtained by our models, shown in Figure \ref{fig:mnist}, we observe that large $\alpha$ results in artifacts at the distribution level.\footnote{In general, we do not expect perceptual quality to correlate with Wasserstein distances.} Specifically, for the default alignment, which is already significant, a considerable fraction of samples do not contain a digit. Moreover, explicitly maximizing $\alpha$ appears to lead to mode collapse, with the digit ``1" being overrepresented. When we instead minimize $\alpha$, substantial and consistent improvements are observed across all datasets, as also quantified by Wasserstein metrics in Figure \ref{fig:sw2-comparison}.

\section{Discussion}

We presented and empirically validated a framework for determining architectural directional biases of diffusion models. Quite surprisingly, we find that, despite the highly nonlinear nature of modern neural networks, these biases are well-described via fixed bases that may be decoupled from the data, making them useful for explaining and predicting generalization ability, as quantified via Wasserstein metrics. We conclude with a discussion on potential future applications of our proposed framework, related work on inductive biases and limitations of our study.

Regarding future applications, much of modern deep learning is empirical, enabled by scale and guided by heuristics. Our insights and further developments along this line of research could allow for more cost-effective and principled development of generative technologies. For example, we see potential applications in AutoML \citep{automl} and neural architecture search \citep{zoph2017neural}.

Importantly, characterizing inductive biases of generative models has implications for understanding and mitigating undesirable behaviors such as memorization of training data \citep{carlinidiffusion, somepalliforgery, gu2025on} and hallucination in generations \citep{aithal2024understanding, lu2025towards, floros2025trustworthy}. By making explicit which patterns a model is predisposed to reproduce, our approach could help identify circumstances under which models are likely to overfit or generate spurious content.

\subsection{Related Work}
\label{subsec:relatedwork}

\citet{kadkhodaie2024generalization} observe that convolutional diffusion is biased toward GAHBs. They give a shrinkage-based interpretation of denoising with adaptive eigenbases defined by local Jacobians of trained networks. A similar Jacobian-based analysis was explored in transformers by \citet{ditinductivebiases}. However, no obvious regularities in the eigenbases were found that could be systematically exploited. Instead, SADs provide a more unified characterization of biases.

With some variations, a notion similar to our average geometry previously appeared in \citet{neuralanisotropydirections,movahedi2025geometric}, who study directional inductive biases of discriminative networks. Interestingly, their analysis predicts classifiers actually perform better when data is aligned with the \textit{largest} eigenvalues of the geometry. Although their setup is not identical to ours, and therefore not directly comparable, these seemingly contrasting conclusions motivate further investigation, left for future work.

\subsection{Limitations}

Our experiments focus on relatively small-scale settings. A rigorous validation of our claims at large scales would require significantly more resources and time. Ultimately, we chose to prioritize insight as opposed to completeness. Moreover, it is important to stress that we have specifically isolated directional inductive biases imposed by the architecture. In principle, the overall dynamics of diffusion models may be influenced or dominated by different factors such as explicit regularization or other implicit priors.
In general, we do not make claims regarding foundation models nor do we claim a complete theory of inductive biases.

\section*{Impact Statement}

This paper presents work whose goal is to advance the field of Machine
Learning. There are many potential societal consequences of our work, none
which we feel must be specifically highlighted here.

\bibliography{example_paper}
\bibliographystyle{icml2026}

\newpage
\appendix
\onecolumn
\begin{table*}[t]
    \caption{Hyperparameters of iDDPM and DiT architectures used in the paper. \textsuperscript{*}For the same number of iterations, DiT/4 networks converge slower compared to DiT/2 but have shorter training times. We doubled their iterations since this was relatively inexpensive.}
    \centering
    \resizebox{\textwidth}{!} {%
    \begin{tabular}{@{}lcccc@{}}
           \toprule
           iDDPM & $\mathcal{N}(\boldsymbol{0},D\boldsymbol{v}\boldsymbol{v}^\top)$ & MNIST & CelebA-HQ & CIFAR-10 \\
           \midrule
           $\texttt{shape}$ & $1\times16\times16$ & $1\times28\times28$ & $1\times56\times56$ & $3\times32\times32$ \\
           $\texttt{diffusion\_steps}$ & 1000 & 1000 & 1000 & 1000 \\
           $\texttt{noise\_schedule}$ & linear & linear & linear & linear \\
           $\texttt{channels}$ & 32 & 32 & 32 & 32 \\
           $\texttt{channel\_mults}$ & 1, 1 & 1, 1 & 1, 1, 1 & 1, 1, 1 \\
           $\texttt{depth}$ & 1 & 2 & 2 & 2 \\
           $\texttt{attn\_resolutions}$ & - & - & - & - \\
           $\texttt{num\_heads}$ & 4 & 4 & 4 & 4 \\
           $\texttt{batch\_size}$ & 1000 & 500 & 500 & 500 \\
           $\texttt{iterations}$ & 2k & 100k & 100k & 200k \\
           $\texttt{learning\_rate}$ & 1e-4 & 1e-3 & 1e-3 & 1e-3 \\
           \bottomrule

    \end{tabular} \quad
    \begin{tabular}{@{}lcc@{}}
           \toprule
           DiT & $\mathcal{N}(\boldsymbol{0},D\boldsymbol{v}\boldsymbol{v}^\top)$ & Sphere \\
           \midrule
           $\texttt{shape}$ & $1\times16\times16$ & $1\times16\times16$ \\
           $\texttt{diffusion\_steps}$ & 1000 & 1000 \\
           $\texttt{noise\_schedule}$ & linear & linear \\
           $\texttt{hidden\_size}$ & 48 & 48 \\
           $\texttt{patch\_size}$ & 2/4 & 2 \\
           $\texttt{depth}$ & 8 & 8 \\
           $\texttt{mlp\_ratio}$ & 2 & 2 \\
           $\texttt{num\_heads}$ & 4 & 4 \\
           $\texttt{batch\_size}$ & 1000 & 1000\\
           $\texttt{iterations}$ & 2/4k\textsuperscript{*} & 10k \\
           $\texttt{learning\_rate}$ & 1e-4 & 1e-4 \\
           \bottomrule

    \end{tabular}}
    \label{tab:hyper}
\end{table*}

\section{Experimental Setup}
\label{appendix:setup}

All of our experiments were conducted on a Linux machine with an NVIDIA RTX 4090 GPU. We train and evaluate diffusion models according to the DDPM framework \citep{ddpm}, with our hyperparameters included in Table \ref{tab:hyper}. To compute the $\text{(M)SW}_2$ metrics we use an overcomplete set of $L=64D$ random directions. Each dataset we consider consists of 10k training and 10k testing samples. For sphere modeling, shown in Figure \ref{fig:spheres}, data is aligned with the last three and the first three SADs. When estimating the average geometry, $\mathbf{G}_{\mathcal{F}}(\mathcal{P},\Theta)$, we use 1M randomly initialized networks.

\section{Geometries of Common Neural Network Architectures}
\label{appendix:commongeometries}

\begin{proposition}[MLP geometry, proof in Appendix \ref{appendix:mlpgeometryproof}]
\label{prop:mlpgeometry}
Let $\mathcal{F}$ be networks of the form $\boldsymbol{z}=\phi(\boldsymbol{W}\boldsymbol{h}+\boldsymbol{b})$, where $\phi:\mathbb{R}\to\mathbb{R}$ is element-wise and $\boldsymbol{h}\in\mathbb{R}^L$ is some function of the input. Assume that $\boldsymbol{W}, \boldsymbol{b}$ are completely independent and parameters within each group are identically distributed. Letting $\boldsymbol{1}$ represent a vector of ones, the geometry takes the form:

\begin{equation}
    \mathbf{G}_{\mathcal{F}}(\mathcal{P},\Theta)=\alpha_{\mathcal{F}}(\mathcal{P},\Theta)\boldsymbol{I}+\beta_{\mathcal{F}}(\mathcal{P},\Theta)\boldsymbol{1}\boldsymbol{1}^\top.
\end{equation}

\end{proposition}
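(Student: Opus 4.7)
The plan is to reduce the geometry to its scalar entries, condition on everything upstream of the output layer, and exploit the symmetry of $\boldsymbol{W}$ and $\boldsymbol{b}$. Writing $z_i = \phi(\boldsymbol{w}^{(i)\top}\boldsymbol{h} + b_i)$, we have $(\mathbf{G}_{\mathcal{F}})_{(i,j)} = \mathbb{E}[z_i z_j]$. Since $\boldsymbol{h}$ depends only on the input (drawn from $\mathcal{P}$) and possibly earlier parameters — all of which are independent of the output-layer parameters $(\boldsymbol{W},\boldsymbol{b})$ — I would first condition on $\boldsymbol{h}$, after which only $\boldsymbol{W}$ and $\boldsymbol{b}$ remain random.

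For off-diagonal entries $(i \neq j)$, the distinct rows $\boldsymbol{w}^{(i)},\boldsymbol{w}^{(j)}$ share no common entries and, together with $b_i, b_j$, form an independent collection (invoking the ``completely independent'' hypothesis on $\boldsymbol{W}$ versus $\boldsymbol{b}$, together with the natural reading that entries within each group are mutually independent rather than merely identically distributed). The conditional expectation thus factors as
\begin{equation}
\mathbb{E}\bigl[z_i z_j \,\bigl|\, \boldsymbol{h}\bigr] = \mathbb{E}\bigl[\phi(\boldsymbol{w}^{(i)\top}\boldsymbol{h}+b_i)\,\bigl|\,\boldsymbol{h}\bigr]\cdot\mathbb{E}\bigl[\phi(\boldsymbol{w}^{(j)\top}\boldsymbol{h}+b_j)\,\bigl|\,\boldsymbol{h}\bigr],
\end{equation}
and identical distribution of rows and of bias entries forces both factors to coincide; call the common value $\gamma(\boldsymbol{h})$. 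For the diagonal, identical distribution likewise makes $\mathbb{E}[z_i^2 \mid \boldsymbol{h}] =: \delta(\boldsymbol{h})$ independent of $i$. Collecting these,
\begin{equation}
\mathbb{E}\bigl[\boldsymbol{z}\boldsymbol{z}^\top \,\bigl|\, \boldsymbol{h}\bigr] = \bigl(\delta(\boldsymbol{h}) - \gamma(\boldsymbol{h})^2\bigr)\,\boldsymbol{I} + \gamma(\boldsymbol{h})^2\,\boldsymbol{1}\boldsymbol{1}^\top,
\end{equation}
and the outer expectation over $\boldsymbol{h}$ — under $\mathcal{P}$ and any remaining parameters in $\Theta$ — preserves this affine structure, yielding $\alpha_{\mathcal{F}}(\mathcal{P},\Theta) = \mathbb{E}_{\boldsymbol{h}}[\delta(\boldsymbol{h}) - \gamma(\boldsymbol{h})^2]$ and $\beta_{\mathcal{F}}(\mathcal{P},\Theta) = \mathbb{E}_{\boldsymbol{h}}[\gamma(\boldsymbol{h})^2]$.

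The main obstacle is being precise about the independence structure: the hypothesis must be read so that distinct rows of $\boldsymbol{W}$ and distinct entries of $\boldsymbol{b}$ are mutually independent, as is the case for any standard initialization scheme — otherwise the off-diagonal factorization fails and one obtains only a permutation-invariant, but not necessarily rank-one-plus-diagonal, covariance. Once this is granted, the argument reduces to careful bookkeeping: the off-diagonal factorization together with the $i$-invariance on the diagonal are exactly what force the geometry into the claimed two-parameter form.
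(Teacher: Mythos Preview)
Your proposal is correct and follows essentially the same route as the paper: condition on $\boldsymbol{h}$, use independence of distinct rows/biases to factor the off-diagonal entries, use identical distribution to make all diagonal (resp.\ off-diagonal) entries coincide, then take the outer expectation. Your $\gamma(\boldsymbol{h})$ and $\delta(\boldsymbol{h})-\gamma(\boldsymbol{h})^2$ are exactly the paper's conditional mean $\mu_{\boldsymbol{z}^{(1)}\mid\boldsymbol{h}}$ and conditional variance $\sigma^2_{\boldsymbol{z}^{(1)}\mid\boldsymbol{h}}$, and your explicit discussion of the independence reading is a welcome clarification that the paper leaves implicit.
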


\begin{proposition}[CNN geometry, proof in Appendix \ref{appendix:cnngeometryproof}]
\label{prop:cnngeometry}
Let $\mathcal{F}$ be the family of networks of the form $\boldsymbol{z}=\phi(\boldsymbol{W}\boldsymbol{h}+\boldsymbol{b})$, where $\phi:\mathbb{R}\to\mathbb{R}$ is element-wise and $\boldsymbol{h}\in\mathbb{R}^{C_{\text{in}}\times L}$ denotes some function of the input. $\boldsymbol{W}$ represents convolution with kernel size $k$, $C_{\text{in}}$ input channels and $C_{\text{out}}$ output channels. Assume $\boldsymbol{W}, \boldsymbol{b}$ are completely independent and parameters within each group are identically distributed. Letting $\otimes$ denote the Kronecker product, the geometry takes the form:

\begin{equation}
 \mathbf{G}_{\mathcal{F}}(\mathcal{P},\Theta)=\boldsymbol{I}_{\text{C}_{out}}\otimes\boldsymbol{A}_{\mathcal{F}}(\mathcal{P},\Theta)+(\boldsymbol{1}\boldsymbol{1}^\top)_{C_{\text{out}}}\otimes\boldsymbol{B}_{\mathcal{F}}(\mathcal{P},\Theta).
\end{equation}

\end{proposition}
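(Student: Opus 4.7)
The plan is to compute entries of $\mathbf{G}_{\mathcal{F}}(\mathcal{P},\Theta)$ directly by exploiting the fact that the output-layer parameters decouple across output channels. I index the output by pairs $(c,\ell)$ with $c\in\{1,\dots,C_{\text{out}}\}$ and $\ell$ running over the $L'$ output spatial positions, so that $z_{c,\ell}=\phi\bigl(\sum_{c_{\text{in}},\kappa}W_{c,c_{\text{in}},\kappa}\,h_{c_{\text{in}},\ell+\kappa}+b_{c}\bigr)$, with the convolution interpreted under whatever padding convention is fixed by the architecture. The goal is to evaluate $\mathbf{G}_{(c_1,\ell_1),(c_2,\ell_2)}=\mathbb{E}[z_{c_1,\ell_1}z_{c_2,\ell_2}]$, where the expectation is taken jointly over $(\boldsymbol{x}_\sigma,\sigma)\sim\mathcal{P}$ (which determines $\boldsymbol{h}$) and $\boldsymbol{\theta}\sim\Theta$ (which includes the output weights $\boldsymbol{W}$ and biases $\boldsymbol{b}$ as well as the earlier parameters feeding into $\boldsymbol{h}$).

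The structural input that drives everything is that the parameter groups $(W_{c,\cdot,\cdot},b_{c})$ for different $c$ are independent and share a common joint distribution, and are independent of $\boldsymbol{h}$. Conditioning on $\boldsymbol{h}$ and splitting cases: for $c_1\neq c_2$ the two groups are independent, so the conditional expectation factorizes as $\mathbb{E}[z_{c_1,\ell_1}z_{c_2,\ell_2}\mid\boldsymbol{h}]=g(\boldsymbol{h},\ell_1)\,g(\boldsymbol{h},\ell_2)$, where $g(\boldsymbol{h},\ell):=\mathbb{E}[z_{c,\ell}\mid\boldsymbol{h}]$ does not depend on $c$ by the common marginal distribution; for $c_1=c_2$ no factorization is possible, but by the same exchangeability argument the joint second moment $m(\boldsymbol{h},\ell_1,\ell_2):=\mathbb{E}[z_{c,\ell_1}z_{c,\ell_2}\mid\boldsymbol{h}]$ is also channel-independent. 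Averaging over $\boldsymbol{h}$ then produces two $L'\times L'$ matrices $[\boldsymbol{M}]_{\ell_1,\ell_2}:=\mathbb{E}_{\boldsymbol{h}}[m(\boldsymbol{h},\ell_1,\ell_2)]$ and $[\boldsymbol{B}_{\mathcal{F}}]_{\ell_1,\ell_2}:=\mathbb{E}_{\boldsymbol{h}}[g(\boldsymbol{h},\ell_1)g(\boldsymbol{h},\ell_2)]$; setting $\boldsymbol{A}_{\mathcal{F}}:=\boldsymbol{M}-\boldsymbol{B}_{\mathcal{F}}$ gives $\mathbf{G}_{(c_1,\ell_1),(c_2,\ell_2)}=\delta_{c_1,c_2}[\boldsymbol{A}_{\mathcal{F}}]_{\ell_1,\ell_2}+[\boldsymbol{B}_{\mathcal{F}}]_{\ell_1,\ell_2}$, which under lexicographic flattening of $(c,\ell)$ is exactly $\boldsymbol{I}_{C_{\text{out}}}\otimes\boldsymbol{A}_{\mathcal{F}}+(\boldsymbol{1}\boldsymbol{1}^\top)_{C_{\text{out}}}\otimes\boldsymbol{B}_{\mathcal{F}}$.

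The step that most deserves care, and that I view as the main obstacle, is justifying the channel-independence of $g$ and $m$: one must verify that the full joint law of $(W_{c,\cdot,\cdot},b_c)$ really is exchangeable in $c$, which is where the ``completely independent'' and ``identically distributed within each group'' hypotheses both enter, not just as entry-wise conditions but as conditions on the per-channel blocks. Once this is in place, the remainder is routine bookkeeping with the Kronecker product, and in particular no independence of weights \emph{within} a single output channel is invoked, mirroring the parallel argument for Proposition \ref{prop:mlpgeometry}; there the same two-case split collapses the spatial dimension entirely, yielding scalar $\alpha_{\mathcal{F}},\beta_{\mathcal{F}}$ in place of the matrices $\boldsymbol{A}_{\mathcal{F}},\boldsymbol{B}_{\mathcal{F}}$.
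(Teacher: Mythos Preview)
Your proposal is correct and follows essentially the same approach as the paper: condition on $\boldsymbol{h}$, split into the cases $c_1=c_2$ and $c_1\neq c_2$, use that the per-output-channel parameter blocks $(W_{c,\cdot,\cdot},b_c)$ are iid and independent of $\boldsymbol{h}$ to reduce everything to the statistics of a single channel, and then average over $\boldsymbol{h}$. Your $g(\boldsymbol{h},\ell)$ and $m(\boldsymbol{h},\ell_1,\ell_2)$ are exactly the entries of the paper's $\boldsymbol{\mu}_{\boldsymbol{z}^{(1)}|\boldsymbol{h}}$ and $\mathbb{E}_{\boldsymbol{W},\boldsymbol{b}}[\boldsymbol{z}^{(1)}{\boldsymbol{z}^{(1)}}^\top]$, and your $\boldsymbol{A}_{\mathcal{F}}=\boldsymbol{M}-\boldsymbol{B}_{\mathcal{F}}$ coincides with the paper's $\mathbb{E}_{\boldsymbol{h}}[\boldsymbol{\Sigma}_{\boldsymbol{z}^{(1)}|\boldsymbol{h}}]$.
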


\begin{proposition}[Transformer geometry, proof in Appendix \ref{appendix:transformergeometryproof}]
\label{prop:transformergeometry}
Let $\mathcal{F}$ be networks of the form $\boldsymbol{z}=\boldsymbol{Q}(\boldsymbol{W}\boldsymbol{h}+\boldsymbol{b})$, where $\boldsymbol{Q}$ is fixed, orthonormal (e.g., unpatchify) and $\boldsymbol{h}\in\mathbb{R}^{T\times L_{\text{in}}}$ is a function of the input. $\boldsymbol{W}:\mathbb{R}^{T\times L_{\text{in}}}\to\mathbb{R}^{T\times{L}_{\text{out}}}$ and $\boldsymbol{b}$ represent a transformer layer operating on $T$ tokens separately. Assume $\boldsymbol{W}, \boldsymbol{b}$ are completely independent and parameters within each group are zero-mean, identically distributed. The geometry has at most $T$ distinct eigenvalues and takes the form:

\begin{equation}
 \mathbf{G}_{\mathcal{F}}(\mathcal{P},\Theta)=\boldsymbol{Q}[\boldsymbol{A}_{\mathcal{F}}(\mathcal{P},\Theta)\otimes\boldsymbol{I}_{L_{\text{out}}}]\boldsymbol{Q}^\top.
\end{equation}

\end{proposition}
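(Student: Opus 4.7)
The plan is to start from the definition of the average geometry, $\mathbf{G}_{\mathcal{F}}(\mathcal{P},\Theta)=\mathbb{E}[\mathcal{F}_{\boldsymbol{\theta}}(\boldsymbol{x}_{\sigma},\sigma)\mathcal{F}_{\boldsymbol{\theta}}(\boldsymbol{x}_{\sigma},\sigma)^\top]$, substitute the parameterization $\boldsymbol{z}=\boldsymbol{Q}(\boldsymbol{W}\boldsymbol{h}+\boldsymbol{b})$, and pull the deterministic $\boldsymbol{Q}$ out of the expectation to reduce the statement to identifying the structure of the ``inner'' second-moment matrix $\boldsymbol{M}:=\mathbb{E}[(\boldsymbol{W}\boldsymbol{h}+\boldsymbol{b})(\boldsymbol{W}\boldsymbol{h}+\boldsymbol{b})^\top]$. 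Because a transformer-style layer applies the same weights and bias to every token, I would write $\boldsymbol{W}=\boldsymbol{I}_T\otimes\boldsymbol{W}_0$ and $\boldsymbol{b}=\boldsymbol{1}_T\otimes\boldsymbol{b}_0$ after stacking tokens, where $\boldsymbol{W}_0\in\mathbb{R}^{L_{\text{out}}\times L_{\text{in}}}$ and $\boldsymbol{b}_0\in\mathbb{R}^{L_{\text{out}}}$ are the per-token parameters. This token-independence is the structural fact that drives the Kronecker form.

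Next, I would expand $\boldsymbol{M}$ and kill cross-terms using the assumptions: the independence of $\boldsymbol{W}$ and $\boldsymbol{b}$ combined with zero-mean gives $\mathbb{E}[\boldsymbol{W}\boldsymbol{h}\boldsymbol{b}^\top]=\mathbb{E}[\boldsymbol{b}\boldsymbol{h}^\top\boldsymbol{W}^\top]=\boldsymbol{0}$ (since $\boldsymbol{h}$ depends only on the probe, which is independent of $\boldsymbol{W},\boldsymbol{b}$). What remains is the weight term $\mathbb{E}[\boldsymbol{W}\boldsymbol{h}\boldsymbol{h}^\top\boldsymbol{W}^\top]$ and the bias term $\mathbb{E}[\boldsymbol{b}\boldsymbol{b}^\top]$. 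For the weight term, I would use that for iid zero-mean entries of $\boldsymbol{W}_0$ with variance $\sigma_W^2$ one has $\mathbb{E}[\boldsymbol{W}_0\boldsymbol{N}\boldsymbol{W}_0^\top]=\sigma_W^2\,\tr(\boldsymbol{N})\,\boldsymbol{I}_{L_{\text{out}}}$ for any deterministic (or independent) matrix $\boldsymbol{N}$; applied block-wise across tokens, the $(t,t')$ block of $\mathbb{E}[\boldsymbol{W}\boldsymbol{h}\boldsymbol{h}^\top\boldsymbol{W}^\top]$ equals $\sigma_W^2\,\mathbb{E}_{\mathcal{P}}[\boldsymbol{h}_{t'}^\top\boldsymbol{h}_t]\,\boldsymbol{I}_{L_{\text{out}}}$, which is exactly $[\boldsymbol{A}_{W}]_{tt'}\,\boldsymbol{I}_{L_{\text{out}}}$ for a $T\times T$ matrix $\boldsymbol{A}_W$. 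The bias term factors analogously as $\sigma_b^2(\boldsymbol{1}_T\boldsymbol{1}_T^\top)\otimes\boldsymbol{I}_{L_{\text{out}}}$. Collecting, $\boldsymbol{M}=\boldsymbol{A}_{\mathcal{F}}(\mathcal{P},\Theta)\otimes\boldsymbol{I}_{L_{\text{out}}}$ with $\boldsymbol{A}_{\mathcal{F}}:=\boldsymbol{A}_W+\sigma_b^2\boldsymbol{1}_T\boldsymbol{1}_T^\top$, and conjugating by $\boldsymbol{Q}$ gives the claimed expression.

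The eigenvalue count then follows from the mixed-product property: the spectrum of $\boldsymbol{A}_{\mathcal{F}}\otimes\boldsymbol{I}_{L_{\text{out}}}$ is the spectrum of $\boldsymbol{A}_{\mathcal{F}}$ with each eigenvalue repeated $L_{\text{out}}$ times, so there are at most $T$ distinct values, and orthonormality of $\boldsymbol{Q}$ preserves this. I expect the main obstacle to be careful bookkeeping around the per-token stacking convention and the independence structure needed to invoke $\mathbb{E}[\boldsymbol{W}_0\boldsymbol{N}\boldsymbol{W}_0^\top]=\sigma_W^2\tr(\boldsymbol{N})\boldsymbol{I}$ when $\boldsymbol{N}=\boldsymbol{h}_t\boldsymbol{h}_{t'}^\top$ is itself random through $\mathcal{P}$: one must justify exchanging the two expectations (using independence of the probe from $\boldsymbol{\theta}$) before applying the identity, and then collapse the $t,t'$ block structure into a clean Kronecker product. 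Everything else is linear-algebraic routine once this step is handled.
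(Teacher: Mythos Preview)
Your proposal is correct and follows essentially the same route as the paper: pull out $\boldsymbol{Q}$, compute the inner second-moment matrix block-wise over tokens using the identity $\mathbb{E}[\boldsymbol{W}_0\boldsymbol{N}\boldsymbol{W}_0^\top]=\sigma_W^2\tr(\boldsymbol{N})\boldsymbol{I}_{L_{\text{out}}}$ (the paper's Lemma~\ref{lemma:iidzeromeanexpectationmat}), collapse the result into a Kronecker product $\boldsymbol{A}\otimes\boldsymbol{I}_{L_{\text{out}}}$, and read off the eigenvalue count. The only cosmetic difference is that the paper conditions on $\boldsymbol{h}$ first and then averages, whereas you carry the probe expectation along throughout; your bias contribution $\sigma_b^2\boldsymbol{1}_T\boldsymbol{1}_T^\top$ (from the shared-bias convention $\boldsymbol{b}=\boldsymbol{1}_T\otimes\boldsymbol{b}_0$) differs from the paper's stated $\sigma_b^2\boldsymbol{I}_T$, but either form of $\boldsymbol{A}_{\mathcal{F}}$ is a $T\times T$ symmetric matrix and the conclusion is unaffected.
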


\newpage

\section{Deferred Proofs}

\begin{lemma}
    \label{lemma:linearoptimalscore}
    Consider the DSM setup with data drawn from $\mathcal{N}(\boldsymbol{0},\boldsymbol{v}\boldsymbol{v}^\top)$ for a fixed noise level $\sigma>0$ and with $\boldsymbol{v}\in\mathbb{S}^{D-1}$. The associated optimal score function is linear and of the form $\boldsymbol{\Omega}(\cdot)$, with $\boldsymbol{\Omega}=\frac{1}{\sigma^2}\left(\frac{\boldsymbol{v}\boldsymbol{v}^\top}{\sigma^2+1}-\boldsymbol{I}\right)$.
\end{lemma}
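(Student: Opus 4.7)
The plan is to compute the noised marginal, use the fact that it is Gaussian, and then invert its covariance via a Sherman-Morrison identity. The computation is direct so I do not anticipate a real obstacle; the only care needed is that the convolution-with-Gaussian step is what makes the otherwise-rank-deficient covariance invertible.

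First I would write $\boldsymbol{x}_\sigma = \boldsymbol{x} + \sigma\boldsymbol{\epsilon}$ with $\boldsymbol{x}\sim\mathcal{N}(\boldsymbol{0},\boldsymbol{v}\boldsymbol{v}^\top)$ and $\boldsymbol{\epsilon}\sim\mathcal{N}(\boldsymbol{0},\boldsymbol{I})$ independent. Since the sum of independent Gaussians is Gaussian, the perturbed density $p_\sigma$ is $\mathcal{N}(\boldsymbol{0},\boldsymbol{\Sigma}_\sigma)$ with $\boldsymbol{\Sigma}_\sigma = \boldsymbol{v}\boldsymbol{v}^\top + \sigma^2\boldsymbol{I}$. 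Because $\sigma>0$, $\boldsymbol{\Sigma}_\sigma$ is strictly positive definite, so the score is well defined and equals $\nabla_{\boldsymbol{x}_\sigma}\log p_\sigma(\boldsymbol{x}_\sigma) = -\boldsymbol{\Sigma}_\sigma^{-1}\boldsymbol{x}_\sigma$, which is automatically linear in $\boldsymbol{x}_\sigma$.

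Next I would invert $\boldsymbol{\Sigma}_\sigma$ using the Sherman-Morrison formula applied to the rank-one update $\sigma^2\boldsymbol{I} + \boldsymbol{v}\boldsymbol{v}^\top$, using $\|\boldsymbol{v}\|_2^2 = 1$ to obtain
\begin{equation}
\boldsymbol{\Sigma}_\sigma^{-1} = \frac{1}{\sigma^2}\boldsymbol{I} - \frac{\boldsymbol{v}\boldsymbol{v}^\top}{\sigma^2(\sigma^2+1)}.
\end{equation}
Negating and factoring out $1/\sigma^2$ gives exactly
\begin{equation}
-\boldsymbol{\Sigma}_\sigma^{-1} = \frac{1}{\sigma^2}\!\left(\frac{\boldsymbol{v}\boldsymbol{v}^\top}{\sigma^2+1} - \boldsymbol{I}\right) = \boldsymbol{\Omega},
\end{equation}
so that the score writes as $\boldsymbol{\Omega}\,\boldsymbol{x}_\sigma$ with $\boldsymbol{\Omega}$ as claimed. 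This establishes both the linearity and the explicit form stated in the lemma.

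The only step with any potential subtlety is the Sherman-Morrison step, and even that reduces to a one-line calculation because the scalar denominator simplifies via $\boldsymbol{v}^\top\boldsymbol{v}=1$. As an optional sanity check I would note that $\boldsymbol{\Omega}$ correctly has eigenvalue $-1/(\sigma^2(\sigma^2+1))$ along $\boldsymbol{v}$ (the signal direction, where the posterior variance is $1+\sigma^2$ before normalization) and $-1/\sigma^2$ on $\boldsymbol{v}^\perp$ (pure noise directions), matching the expected Tweedie-type shrinkage.
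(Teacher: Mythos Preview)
Your proposal is correct and follows essentially the same route as the paper: identify $p_\sigma$ as $\mathcal{N}(\boldsymbol{0},\boldsymbol{v}\boldsymbol{v}^\top+\sigma^2\boldsymbol{I})$, take the Gaussian score $-\boldsymbol{\Sigma}_\sigma^{-1}\boldsymbol{x}_\sigma$, and simplify the inverse via Sherman--Morrison. The only additions are your remark on positive definiteness and the eigenvalue sanity check, which are fine but not needed.
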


\begin{proof}
    $q_{\sigma}=\mathcal{N}(\boldsymbol{0},\boldsymbol{v}\boldsymbol{v}^\top+\sigma^2\boldsymbol{I})$. The log-density is therefore quadratic, resulting in a linear score function as follows:

    \begin{equation}
        \nabla_{\boldsymbol{x}}\log q_\sigma(\boldsymbol{x})=-\frac{1}{2}\nabla_{\boldsymbol{x}}\left[\boldsymbol{x}^\top(\boldsymbol{v}\boldsymbol{v}^\top+\sigma^2\boldsymbol{I})^{-1}\boldsymbol{x}\right]=-(\boldsymbol{v}\boldsymbol{v}^\top+\sigma^2\boldsymbol{I})^{-1}\boldsymbol{x}.
    \end{equation}

    We can further simplify this via the Sherman-Morrison formula:

    \begin{equation}
        \boldsymbol{\Omega}=-(\boldsymbol{v}\boldsymbol{v}^\top+\sigma^2\boldsymbol{I})^{-1}=\frac{1}{\sigma^2}\left(\frac{\boldsymbol{v}\boldsymbol{v}^\top}{\sigma^2+1}-\boldsymbol{I}\right).
    \end{equation}

\end{proof}

\begin{lemma}
\label{lemma:iidzeromeanexpectationmat}
Let $\boldsymbol{W}$ be a random matrix with entries that are iid. Assume that their mean is zero and their variance is $\sigma^2$. For any compatible matrix, $\boldsymbol{X}$, independent of $\boldsymbol{W}$, we have $\mathbb{E}_{\boldsymbol{W}}[\boldsymbol{W}^\top\boldsymbol{X}\boldsymbol{W}]=\sigma^2\tr(\boldsymbol{X})\boldsymbol{I}$, where $\tr(\cdot)$ denotes the trace.
\end{lemma}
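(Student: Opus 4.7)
The plan is to prove the identity entrywise, exploiting the fact that the iid and zero-mean assumptions collapse a quadruple sum to a single diagonal sum. Suppose $\boldsymbol{W}\in\mathbb{R}^{m\times n}$ and $\boldsymbol{X}\in\mathbb{R}^{m\times m}$ so that $\boldsymbol{W}^\top\boldsymbol{X}\boldsymbol{W}\in\mathbb{R}^{n\times n}$ is well-defined. First I would write out the $(i,j)$ entry as
\begin{equation}
(\boldsymbol{W}^\top\boldsymbol{X}\boldsymbol{W})_{ij}=\sum_{k,l}W_{ki}\,X_{kl}\,W_{lj},
\end{equation}
and take expectations through the finite sum by linearity, leaving $\sum_{k,l}X_{kl}\,\mathbb{E}[W_{ki}W_{lj}]$.

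Next I would evaluate $\mathbb{E}[W_{ki}W_{lj}]$ using the iid, zero-mean, variance-$\sigma^2$ assumption. If $(k,i)\neq(l,j)$ the entries are independent and zero-mean so the expectation vanishes; the only surviving contributions come from $k=l$ and $i=j$, giving $\sigma^2\delta_{kl}\delta_{ij}$. Substituting back, the off-diagonal entries ($i\neq j$) are zero, while the diagonal entries collapse to
\begin{equation}
\mathbb{E}[(\boldsymbol{W}^\top\boldsymbol{X}\boldsymbol{W})_{ii}]=\sigma^2\sum_{k}X_{kk}=\sigma^2\tr(\boldsymbol{X}),
\end{equation}
independent of $i$. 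Assembling these entries into a matrix yields $\sigma^2\tr(\boldsymbol{X})\boldsymbol{I}$, as claimed.

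There is no real obstacle here; the only subtlety is being careful that independence (not just uncorrelatedness) is not required, since entries are iid the pairwise second moment $\mathbb{E}[W_{ki}W_{lj}]$ factors unless the two indices coincide, in which case it equals $\sigma^2$. The result also does not require any structural assumption on $\boldsymbol{X}$ (e.g.\ symmetry), which is important for its later use where $\boldsymbol{X}$ will be instantiated by arbitrary expressions arising from the DSM analysis.
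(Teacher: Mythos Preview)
Your proof is correct and follows essentially the same entrywise computation as the paper: both expand $(\boldsymbol{W}^\top\boldsymbol{X}\boldsymbol{W})_{ij}$, use the iid zero-mean assumption to reduce $\mathbb{E}[W_{ki}W_{lj}]$ to $\sigma^2\delta_{kl}\delta_{ij}$, and collapse the sum to $\sigma^2\tr(\boldsymbol{X})\delta_{ij}$. The paper compresses this into a single displayed line using column-vector notation, while you spell out the double sum explicitly, but the underlying argument is identical.
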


\begin{proof}
    Let $(\cdot)^{(i)}$ be column $i$. For entry $(i,j)$ we can write the expectation as follows:
\begin{equation}
\sum_{k}\mathbb{E}\left[{\boldsymbol{W}^{(i)}}^{\top}\boldsymbol{X}^{(k)}\boldsymbol{W}_{(k,j)}\right]=\sum_{k}\sigma^2\delta_{i-j}\boldsymbol{X}_{(k,k)}=\sigma^2\tr(\boldsymbol{X})\delta_{i-j}.\end{equation}
\end{proof}

\newpage

\subsection{Proposition \ref{prop:mlpgeometry}}
\label{appendix:mlpgeometryproof}

\begin{proof}
Let $(\cdot)^{(i)}$ be row $i$. The average geometry is expressed as: \begin{equation}\mathbf{G}_{\mathcal{F}}(\mathcal{P},\Theta)=\mathbb{E}_{\boldsymbol{h}}\mathbb{E}_{\boldsymbol{W},\boldsymbol{b}}\left[\phi(\boldsymbol{W}\boldsymbol{h}+\boldsymbol{b})\phi(\boldsymbol{W}\boldsymbol{h}+\boldsymbol{b})^\top\right].\end{equation}
First, compute the inner expectation, i.e., $\mathbf{G}_{\mathcal{F}}(\mathcal{P},\Theta)|_{\boldsymbol{h}}$. For indices $i,j$, since the parameters are iid:
\begin{equation}
[\mathbf{G}_{\mathcal{F}}(\mathcal{P},\Theta)|_{\boldsymbol{h}}]_{(i,j)} =\left\{
\begin{array}{ll}
      \mathbb{E}_{\boldsymbol{W},\boldsymbol{b}}[\phi(\boldsymbol{W}^{(1)}\boldsymbol{h}+\boldsymbol{b}^{(1)})^2] & \text{if }i=j \\
      \mathbb{E}_{\boldsymbol{W},\boldsymbol{b}}[\phi(\boldsymbol{W}^{(1)}\boldsymbol{h}+\boldsymbol{b}^{(1)})]^2 & \text{otherwise} \\
\end{array} 
\right..
\end{equation}
Writing $\boldsymbol{z}^{(1)}=\phi(\boldsymbol{W}^{(1)}\boldsymbol{h}+\boldsymbol{b}^{(1)})$, we express the above via the conditional mean, $\mu_{\boldsymbol{z}^{(1)}|\boldsymbol{h}}$, and conditional variance $\sigma^2_{\boldsymbol{z}^{(1)}|\boldsymbol{h}}$:
\begin{equation}
\mathbf{G}_{\mathcal{F}}(\mathcal{P},\Theta)|_{\boldsymbol{h}}=\sigma^2_{\boldsymbol{z}^{(1)}|\boldsymbol{h}}\boldsymbol{I}+\mu^2_{\boldsymbol{z}^{(1)}|\boldsymbol{h}}\boldsymbol{1}\boldsymbol{1}^\top,
\end{equation}
where $\boldsymbol{1}$ is a vector of ones. Now, taking the outer expectation yields the desired result:
\begin{equation}
    \mathbf{G}_{\mathcal{F}}(\mathcal{P},\Theta)=\mathbb{E}_{\boldsymbol{h}}[\sigma^2_{\boldsymbol{z}^{(1)}|\boldsymbol{h}}]\boldsymbol{I}+\mathbb{E}_{\boldsymbol{h}}[\mu^2_{\boldsymbol{z}^{(1)}|\boldsymbol{h}}]\boldsymbol{1}\boldsymbol{1}^\top.
\end{equation}
\end{proof}

\subsection{Proposition \ref{prop:cnngeometry}}
\label{appendix:cnngeometryproof}

\begin{proof}
    As in our treatment of MLPs in Appendix \ref{appendix:mlpgeometryproof}, we first compute the geometry conditioned on $\boldsymbol{h}$, i.e., $\mathbf{G}_{\mathcal{F}}(\mathcal{P},\Theta)|_{\boldsymbol{h}}$. Vectorizing, write the $i^{\text{th}}$ input channel as $\boldsymbol{h}^{(i)}\in\mathbb{R}^{L}$. Then, $\boldsymbol{W}$ is a $C_{\text{out}}\times C_{\text{in}}$ block matrix with each $\boldsymbol{W}_{(i,j)}$ representing convolution with a filter of size $k$. Therefore, we write $\boldsymbol{z}^{(i)}=\phi(\sum_{j}\boldsymbol{W}_{(i,j)}\boldsymbol{h}^{(j)}+\boldsymbol{b}^{(i)})$. Since the parameters are iid, the conditional geometry is expressed as the following $C_{\text{out}}\times C_{\text{out}}$ block matrix:
    \begin{equation}
        [\mathbf{G}_{\mathcal{F}}(\mathcal{P},\Theta)|_{\boldsymbol{h}}]_{(m,n)}=\left\{
\begin{array}{ll}
      \mathbb{E}_{\boldsymbol{W},\boldsymbol{b}}[\boldsymbol{z}^{(1)}{\boldsymbol{z}^{(1)}}^\top] & \text{if }m=n \\
      \mathbb{E}_{\boldsymbol{W},\boldsymbol{b}}[\boldsymbol{z}^{(1)}]\mathbb{E}_{\boldsymbol{W},\boldsymbol{b}}[\boldsymbol{z}^{(1)}]^\top & \text{otherwise} \\
\end{array} 
\right..
    \end{equation}
We can rephrase the above result in terms of the conditional mean, $\boldsymbol{\mu}_{\boldsymbol{z}^{(1)}|\boldsymbol{h}}$, and a conditional covariance matrix $\boldsymbol{\Sigma}_{{\boldsymbol{z}^{(1)}}|\boldsymbol{h}}$. The manipulation is identical to the one used to derive the MLP geometry. Averaging over $\boldsymbol{h}$ yields:
\begin{equation}
    \mathbf{G}_{\mathcal{F}}(\mathcal{P},\Theta)=\boldsymbol{I}_{C_\text{out}}\otimes\mathbb{E}_{\boldsymbol{h}}[\boldsymbol{\Sigma}_{\boldsymbol{z}^{(1)}|\boldsymbol{h}}]+(\boldsymbol{1}\boldsymbol{1}^\top)_{C_{\text{out}}}\otimes\mathbb{E}_{\boldsymbol{h}}[\boldsymbol{\mu}_{\boldsymbol{z}^{(1)}|\boldsymbol{h}}\boldsymbol{\mu}^\top_{\boldsymbol{z}^{(1)}|\boldsymbol{h}}].
\end{equation}

\end{proof}

\subsection{Proposition \ref{prop:transformergeometry}}
\label{appendix:transformergeometryproof}

\begin{proof}
    We first focus on $\boldsymbol{Q}=\boldsymbol{I}$. After vectorizing $\boldsymbol{h}$, $\boldsymbol{W}$ is block-diagonal with $\boldsymbol{W}_{(i,i)}=\boldsymbol{W}_{(1,1)}$ operating on  each token independently, which we write as $\boldsymbol{h}^{(i)}\in\mathbb{R}^{L_{\text{in}}}$. Conditioned on $\boldsymbol{h}$, the geometry is a block matrix with block $(i, j)$:
    \begin{equation}
        \begin{split}
        [\mathbf{G}_{\mathcal{F}}(\mathcal{P},\Theta)|_{\boldsymbol{h}}]_{(i,j)}&=\mathbb{E}_{\boldsymbol{W},\boldsymbol{b}}[(\boldsymbol{W}_{(1,1)}\boldsymbol{h}^{(i)}+\boldsymbol{b}^{(1)})(\boldsymbol{W}_{(1,1)}\boldsymbol{h}^{(j)}+\boldsymbol{b}^{(1)})^\top]\\&=(\sigma^2_{\boldsymbol{W}}{\boldsymbol{h}^{(i)}}^\top\boldsymbol{h}^{(j)}+\sigma^2_{\boldsymbol{b}})\boldsymbol{I},
        \end{split}
    \end{equation}
where the last equality is by Lemma \ref{lemma:iidzeromeanexpectationmat}, assuming parameters in $\boldsymbol{W}$, $\boldsymbol{b}$ have variances $\sigma^2_{\boldsymbol{W}}$, $\sigma^2_{\boldsymbol{b}}$ respectively. Note, every block is $\propto\boldsymbol{I}$ and depends on entries of $\boldsymbol{h}\boldsymbol{h}^\top\in\mathbb{R}^{T\times T}$, where, with a slight abuse of notation, we have reverted to the matrix representation $\boldsymbol{h}\in\mathbb{R}^{T\times L_{in}}$. Writing this compactly as $(\sigma^2_{\boldsymbol{W}}\boldsymbol{h}\boldsymbol{h}^\top+\sigma^2_{\boldsymbol{b}}\boldsymbol{1}\boldsymbol{1}^\top)\otimes\boldsymbol{I}_{L_{\text{out}}}$, applying $\boldsymbol{Q}$ and averaging over $\boldsymbol{h}$:
\begin{equation}
    \mathbf{G}_{\mathcal{F}}(\mathcal{P},\Theta)=\boldsymbol{Q}\mathbb{E}_{\boldsymbol{h}}[(\sigma^2_{\boldsymbol{W}}\boldsymbol{h}\boldsymbol{h}^\top+\sigma^2_{\boldsymbol{b}}\boldsymbol{1}\boldsymbol{1}^\top)\otimes\boldsymbol{I}_{L_{\text{out}}}]\boldsymbol{Q}^\top.
\end{equation}
Eigenvalues are products of eigenvalues of factors in the Kronecker product, i.e., the $T$ eigenvalues of $\sigma^2_{\boldsymbol{W}}\mathbb{E}_{\boldsymbol{h}}[\boldsymbol{h}\boldsymbol{h}^\top]+\sigma^2_{\boldsymbol{b}}\boldsymbol{1}\boldsymbol{1}^\top$.

\end{proof}

\newpage

\subsection{Theorem \ref{theorem:linearscore}}
\label{appendix:linearscoreproof}

\begin{proof}
Let $\boldsymbol{\Omega}=\boldsymbol{\Phi}\boldsymbol{\Theta}$ be a linear model. The optimization objective is:
    \begin{equation}
    \begin{split}
        \mathcal{J}_{\text{DSM}}(\boldsymbol{\Theta})&=\mathbb{E}_{\boldsymbol{x}\sim\mathcal{N}(\boldsymbol{0},\boldsymbol{v}\boldsymbol{v}^\top),\boldsymbol{\epsilon}\sim\mathcal{N}(\boldsymbol{0},\boldsymbol{I})}\left[\left\|\boldsymbol{\Phi}\boldsymbol{\Theta}(\boldsymbol{x}+\sigma\boldsymbol{\epsilon})+\frac{\boldsymbol{\epsilon}}{\sigma}\right\|^2_2\right]\\&=\mathbb{E}\left[\left\|\boldsymbol{\Phi}\boldsymbol{\Theta}\boldsymbol{x}+\left(\sigma\boldsymbol{\Phi}\boldsymbol{\Theta}+\frac{\boldsymbol{I}}{\sigma}\right)\boldsymbol{\epsilon}\right\|^2_2\right]\\&\overset{(*)}{=}\mathbb{E}\left[\left\|\boldsymbol{\Phi}\boldsymbol{\Theta}\boldsymbol{x}\right\|^2_2\right]+\mathbb{E}\left[\left\|\left(\sigma\boldsymbol{\Phi}\boldsymbol{\Theta}+\frac{\boldsymbol{I}}{\sigma}\right)\boldsymbol{\epsilon}\right\|^2_2\right]=\|\boldsymbol{\Phi}\boldsymbol{\Theta}\boldsymbol{v}\|^2_2+\left\|\sigma\boldsymbol{\Phi}\boldsymbol{\Theta}+\frac{\boldsymbol{I}}{\sigma}\right\|^2_{F},
    \end{split}
    \end{equation}
where $(*)$ follows by independence of $\boldsymbol{x}$ and $\boldsymbol{\epsilon}$. The gradient with respect to $\boldsymbol{\Theta}$ is then given by:
\begin{equation}
    \nabla_{\boldsymbol{\Theta}}\mathcal{J}_{\text{DSM}}(\boldsymbol{\Theta})=2\boldsymbol{\Phi}^\top[\boldsymbol{\Phi}\boldsymbol{\Theta}(\boldsymbol{v}\boldsymbol{v}^\top+\sigma^2\boldsymbol{I})+\boldsymbol{I}].
\end{equation}
With this, and for a suitable $\eta>0$, we express the GD learning dynamics with respect to $\boldsymbol{\Omega}$ as:
\begin{equation}
    \boldsymbol{\Omega}_{t}=\boldsymbol{\Omega}_{t-1}-\eta\boldsymbol{\Phi}\nabla_{\boldsymbol{\Theta}}\mathcal{J}_{\text{DSM}}(\boldsymbol{\Theta})=\boldsymbol{\Omega}_{t-1}-2\eta\boldsymbol{\Phi}\boldsymbol{\Phi}^\top[\boldsymbol{\Omega}_{t-1}(\boldsymbol{v}\boldsymbol{v}^\top+\sigma^2\boldsymbol{I})+\boldsymbol{I}].
\end{equation}
Moreover, we study the error dynamics defined by $\boldsymbol{E}_t=\boldsymbol{\Omega}_{t}-\boldsymbol{\Omega}^{*}$. Here, $\boldsymbol{\Omega}^{*}=\frac{1}{\sigma^2}\left(\frac{\boldsymbol{v}\boldsymbol{v}^\top}{\sigma^2+1}-\boldsymbol{I}\right)$ is given by Lemma \ref{lemma:linearoptimalscore} and we can verify that $\boldsymbol{\Omega}^{*}(\boldsymbol{v}\boldsymbol{v}^\top+\sigma^2\boldsymbol{I})+\boldsymbol{I}=\boldsymbol{0}$, so it is a stationary point. Therefore, we write:
\begin{equation}
    \label{eq:gddynamics}
    \boldsymbol{E}_{t}=\boldsymbol{E}_{t-1}-2\eta\boldsymbol{\Phi}\boldsymbol{\Phi}^\top\boldsymbol{E}_{t-1}(\boldsymbol{v}\boldsymbol{v}^\top+\sigma^2\boldsymbol{I}).
\end{equation}
Focus on the sequence of expected errors, $\mathbb{E}[\boldsymbol{E}_t]$, where the randomness is over the initialization.\footnote{More generally, our analysis here is also applicable to fluctuations arising due to SGD.} Assuming $\mathbb{E}[\boldsymbol{\Theta}_{0}]=\boldsymbol{0}$, $\mathbb{E}[\boldsymbol{E}_0]=-\boldsymbol{\Omega}^{*}$. Additionally, if $\boldsymbol{v}\in\{\boldsymbol{u}_{i}\}_{i=1}^{D}$ is an eigenvector of $\boldsymbol{\Phi}\boldsymbol{\Phi}^\top$ with corresponding eigenvalues $\{\lambda_i\}_{i=1}^{D}$, matrices in Equation \ref{eq:gddynamics} commute since they share eigenvectors. This forces the eigenspaces of all subsequent error terms. We have:
\begin{equation}
    \mathbb{E}[\boldsymbol{E}_t]=\mathbb{E}[\boldsymbol{E}_{t-1}]-2\eta\boldsymbol{\Phi}\boldsymbol{\Phi}^\top(\boldsymbol{u}_{i}{\boldsymbol{u}_{i}}^\top+\sigma^2\boldsymbol{I})\mathbb{E}[\boldsymbol{E}_{t-1}]=-[\boldsymbol{I}-2\eta(\lambda_i\boldsymbol{u}_{i}{\boldsymbol{u}_{i}}^\top+\sigma^2\boldsymbol{\Phi}\boldsymbol{\Phi}^\top)]^t\boldsymbol{\Omega}^{*},
\end{equation}
where it is clear that the error decays exponentially. In particular, for sufficiently small $\eta$, the iterated matrix is positive semidefinite and therefore convergence depends on the minimum eigenvalue of $\lambda_i\boldsymbol{u}_{i}{\boldsymbol{u}_{i}}^\top+\sigma^2\boldsymbol{\Phi}\boldsymbol{\Phi}^\top$, i.e., it is $\mathcal{O}[(1-2\eta\rho_i)^t]$ with $\rho_i=\min[(\sigma^2+1)\lambda_i,\sigma^2\min_{j\neq i}\lambda_{j}]$. Suppose $\exists \lambda_j\le\lambda_i$, then $\rho_i=\sigma^2\lambda_{D}$, i.e., the convergence rate is fixed for $i<D$. However, for $i=D$ we have $\rho_D=\min[(\sigma^2+1)\lambda_{D},\sigma^2\lambda_{D-1}]>\sigma^2\lambda_{D}$. That is, we converge faster for $i=D$.

Now, to complete the proof, we focus on the stochastic gradient at optimality:
\begin{equation}
    \nabla_{\boldsymbol{\Theta}}\widehat{\mathcal{J}}_{\text{DSM}}(\boldsymbol{x},\boldsymbol{\epsilon};\boldsymbol{\Theta}^*)=2\boldsymbol{p}\boldsymbol{q}^\top, \quad \boldsymbol{p}=\boldsymbol{\Phi}^\top\left(\boldsymbol{\Omega}^{*}\boldsymbol{q}+\frac{\boldsymbol{\epsilon}}{\sigma}\right), \quad \boldsymbol{q}=\boldsymbol{x}+\sigma\boldsymbol{\epsilon}.
\end{equation}
Note, by construction, all of the above quantities are zero-mean. In particular, this implies that $\boldsymbol{p}$, $\boldsymbol{q}$ are uncorrelated and, since they are jointly Gaussian, we claim that they are independent. Therefore, by setting $\boldsymbol{v}=\boldsymbol{u}_i$ and vectorizing, we can write the stochastic gradient covariance as:
\begin{equation}
    \begin{split}4\mathbb{E}\left[\text{vec}(\boldsymbol{p}\boldsymbol{q}^\top)\text{vec}(\boldsymbol{p}\boldsymbol{q}^\top)^\top\right]&=4\mathbb{E}\left[(\boldsymbol{q}\boldsymbol{q}^\top)\otimes(\boldsymbol{p}\boldsymbol{p}^\top)\right]\\&=4\mathbb{E}\left[\boldsymbol{q}\boldsymbol{q}^\top\right]\otimes\mathbb{E}\left[\boldsymbol{p}\boldsymbol{p}^\top\right]\\&=\frac{4}{\sigma^2(\sigma^2+1)}(\boldsymbol{u}_{i}\boldsymbol{u}_{i}^\top+\sigma^2\boldsymbol{I})\otimes(\boldsymbol{\Phi}^\top\boldsymbol{u}_{i}\boldsymbol{u}_{i}^\top\boldsymbol{\Phi})\propto\lambda_i.
    \end{split}
\end{equation}

\end{proof}

\subsection{Theorem \ref{theorem:alignment}}
\label{appendix:alignment}

\begin{proof}
We first simplify Equation \ref{eq:alignment}:
\begin{equation}
    \label{eq:alignment2}
    \alpha=\mathbb{E}_{\boldsymbol{x}\sim p}[\boldsymbol{x}^\top(\boldsymbol{W}^\top\mathbf{G}_{\mathcal{F}}\boldsymbol{W}\boldsymbol{x})]=\tr(\boldsymbol{W}^\top\mathbf{G}_{\mathcal{F}}\boldsymbol{W}\boldsymbol{C}),
\end{equation}
where $\boldsymbol{C}:=\mathbb{E}_{\boldsymbol{x}\sim p}[\boldsymbol{x}\boldsymbol{x}^\top]$ is the second moment of the data. Let $\mathbf{G}_{\mathcal{F}}=\boldsymbol{U}\boldsymbol{\Lambda}\boldsymbol{U}^\top$ and $\boldsymbol{C}=\boldsymbol{V}\boldsymbol{\mathrm{Z}}\boldsymbol{V}^\top$ be eigendecompositions with eigenvalues $\{\lambda_i\}^D_{i=1}$ and $\{\zeta_i\}_{i=1}^D$ respectively. Define the matrix $\boldsymbol{Q}=\boldsymbol{U}^\top\boldsymbol{W}\boldsymbol{V}$. Then, Equation \ref{eq:alignment2} is equivalent to:
\begin{equation}
    \label{eq:alignment3}\begin{split}\alpha&=\tr(\boldsymbol{W}^\top\boldsymbol{U}\boldsymbol{\Lambda}\boldsymbol{U}^\top\boldsymbol{W}\boldsymbol{V}\boldsymbol{\mathrm{Z}}\boldsymbol{V}^\top)\\&=\tr(\boldsymbol{V}^\top\boldsymbol{W}^\top\boldsymbol{U}\boldsymbol{\Lambda}\boldsymbol{U}^\top\boldsymbol{W}\boldsymbol{V}\boldsymbol{\mathrm{Z}})\\&=\tr(\boldsymbol{Q}^\top\boldsymbol{\Lambda}\boldsymbol{Q}\boldsymbol{\mathrm{Z}})=\sum_{i,j}\lambda_i\zeta_j[\boldsymbol{Q}_{(i,j)}]^2.\end{split}
\end{equation}
Since $\boldsymbol{Q}$ is orthogonal, the matrix defined by $\boldsymbol{P}_{(i,j)}=[\boldsymbol{Q}_{(i,j)}]^2$ is orthostochastic and therefore doubly stochastic. Additionally, Equation \ref{eq:alignment3} shows that optimizing $\alpha$ is linear in $\boldsymbol{P}$. Relaxing the feasible set to doubly stochastic matrices, Birkhoff’s theorem implies that this problem has an extremizer at a permutation matrix. In particular, a minimizer is obtained by misaligning eigenvalues, i.e., $\boldsymbol{P}=\boldsymbol{J}$ is achieved if $\boldsymbol{Q}=\boldsymbol{J} \iff \boldsymbol{W}=\boldsymbol{U}\boldsymbol{J}\boldsymbol{V}^\top$. Similarly, $\alpha$ is maximized when the eigenvalues are aligned, i.e., $\boldsymbol{P}=\boldsymbol{I}$ if $\boldsymbol{Q}=\boldsymbol{I} \iff \boldsymbol{W}=\boldsymbol{U}\boldsymbol{V}^\top$. We note that these permutations are also feasible for the original problem since they are orthostochastic. Moreover, one verifies that they are indeed optimal via trace inequalities. That is, $\boldsymbol{P}=\boldsymbol{J}$ attains Ruhe’s lower bound and $\boldsymbol{P}=\boldsymbol{I}$ von Neumann’s upper bound.
\end{proof}


\end{document}